\documentclass{article}

\PassOptionsToPackage{numbers, compress}{natbib}
\usepackage[main, final]{neurips_2026}

\usepackage[utf8]{inputenc} 
\usepackage[T1]{fontenc}    
\usepackage{hyperref}       
\usepackage{url}            
\usepackage{booktabs}       
\usepackage{amsfonts}       
\usepackage{nicefrac}       
\usepackage{microtype}      
\usepackage{xcolor}         

\usepackage{amsmath}
\usepackage{amsthm}
\usepackage{graphicx}

\newtheorem{theorem}{Theorem}[section]

\newtheorem{lemma}[theorem]{Lemma}

\theoremstyle{definition}
\newtheorem{definition}[theorem]{Definition}
\theoremstyle{remark}
\newtheorem*{remark}{Remark}

\usepackage[linesnumbered,ruled]{algorithm2e}
\SetKwInput{KwInput}{Input}
\SetKwInput{KwOutput}{Output}
\SetKwInput{KwGlobal}{Global Parameter}
\newcommand\ber{\mathsf{Ber}}
\newcommand\abovethreshold{\mathsf{AboveThreshold}}
\newcommand\comp{\mathsf{Comp}}
\newcommand\alg{\mathsf{Alg}}
\newcommand\lap{\mathrm{Lap}}
\newcommand\A{\mathcal{A}}
\newcommand\D{\mathcal{D}}

\title{Private Online Prediction from Experts with Small Losses}

\author{%
  Bo Li$^{1, 2}$\quad Peng Ye$^{2}$\\
  $^{1}$Guangzhou HKUST Fok Ying Tung Research Institute \\
  $^{2}$Department of Computer Science and Engineering, HKUST \\
  \texttt{bli@ust.hk},~ \texttt{pyeac@connect.ust.hk}\\
}

\begin{document}

\maketitle

\begin{abstract}
  We consider online prediction from experts, a fundamental problem in machine learning, under differential privacy constraints. Existing private algorithms achieve near-optimal regret $\tilde{O}(\sqrt{T})$, where $T$ is the time horizon. However, this bound becomes suboptimal when $L^\star$, the cumulative loss of the best expert, is significantly less than $T$. In this work, we present the first differentially private algorithm with regret $\tilde{O}(\sqrt{L^\star})$, offering a substantial improvement in the low-loss regime. Our regret bound matches the non-private lower bound up to poly-logarithmic factors for constant privacy parameters, demonstrating that privacy incurs only a small cost in this setting. Our result assumes that the input sequence is oblivious, i.e., it is fixed in advance and independent of the algorithm's outputs. We also prove that an $\tilde{\Omega}(\sqrt{T})$ regret is inevitable for adaptively generated inputs, even when $L^\star = 0$.
\end{abstract}

\section{Introduction}

Online prediction from experts (OPE) is a fundamental problem in machine learning. In this problem, the learner receives a stream of $T$ loss functions $\ell_1,\dots,\ell_T$. In each round $t$, the learner chooses an expert $i_t\in[N]$ before seeing $\ell_t$ and suffers loss $\ell_t(i_t)$. The learner aims to minimize its total loss, which is compared to that of the best expert in hindsight, denoted by $L^\star$. The difference between these two quantities is called the regret, which measures the learner's performance for this problem.

A critical concern when deploying machine learning algorithms on sensitive data (e.g., financial or medical records) is privacy leakage. To address this, differential privacy (DP)~\citep{dwork2006calibrating,dwork2006our} has emerged as the de facto standard for formalizing the privacy guarantee provided by an algorithm. Informally, an algorithm is said to be differentially private if changing a single input data point alters its output distribution only negligibly. As a result, such a change in an individual's information is unlikely to be detected, thereby protecting personal privacy.

Given the importance of OPE, studying it under DP has received significant attention. Previous work~\citep{asi2023private,asi2024private} has developed algorithms that achieve regret $\tilde{O}(\sqrt{T})$, which is nearly optimal because it matches the $\Omega(\sqrt{T\log N}+\log N)$ lower bound~\citep{cesa1997use}---a bound that holds even without any privacy constraints. These results suggest that the cost of privacy in OPE is negligible in the worst case.

In many practical settings---such as financial forecasting, weather prediction, or personalized recommendations---domain experts are specifically selected for their accuracy, so their cumulative loss tends to be low over time. However, the $\tilde{O}(\sqrt{T})$ regret becomes less meaningful when the loss of the best expert $L^\star$ is small because the regret itself can be substantially greater than $L^\star$, rendering the learner's loss considerably higher than $L^\star$. As an illustration, suppose $L^\star = T^{1/4}$. An algorithm with regret $\sqrt{T}$ may incur a loss of $\sqrt{T}$, which is $T^{1/4}$ times larger than $L^\star$. In contrast, when $L^\star= T^{3/4}$ the total loss is only $(1+o(1))L^\star$. Thus, worst-case regret bounds are not suitable for OPE with small losses.

In the non-private setting, a natural remedy is to run the multiplicative weights method with a larger learning rate: increasing the learning rate from $\sqrt{\ln N/T}$ to $\sqrt{\ln N / L^\star}$ yields a regret bound of $O(\sqrt{L^\star\log N} + \log N)$~\citep{littlestone1994weighted,freund1997decision}. Unfortunately, such a strategy cannot be directly applied under DP. This is because a large learning rate amplifies the algorithm's sensitivity to individual data points, thereby violating privacy constraints. Indeed, all existing private OPE algorithms impose a strict upper bound on the learning rate to ensure DP, which prevents them from adapting to small $L^\star$. Consequently, it remains unclear whether an $\tilde{O}(\sqrt{L^\star})$ regret bound is achievable in the private setting.

\subsection{Our contribution}
\label{sec:results}
In this work, we assume that $\ell_1,\dots,\ell_T$ are chosen by an oblivious adversary. That is, they are fixed in advance and independent of the outputs of the learner. 

We propose the first differentially private algorithm that attains regret $\tilde{O}(\sqrt{L^\star})$ for OPE. This nearly matches the optimal regret without privacy constraints, suggesting that the cost of privacy remains small in the low-loss regime. The results are formally stated in the theorems below.

\begin{theorem}
\label{thm:main_pure}
    Let $\ell_1,\dots,\ell_T$ be chosen by an oblivious adversary. Given $\varepsilon\le 1$, there exists an $\varepsilon$-differentially private algorithm for OPE with expected regret
    \begin{equation*}
        O\left(\left(\sqrt{\log N\log T} + \log N \right)\cdot\sqrt{\frac{L^\star}{\varepsilon}}\right) + (1+\ln (1+L^\star)) \cdot\iota_1,
    \end{equation*}
    where $\iota_1 = O(\log^2N/\varepsilon + \log N\log T/\varepsilon)$.
\end{theorem}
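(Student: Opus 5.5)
We plan to construct the algorithm in two layers: a base private learner that attains small regret when $L^\star$ is known up to a constant factor, and a private doubling scheme that estimates $L^\star$ online. For the base learner we intend a \emph{lazy} multiplicative-weights / sampling procedure in the spirit of shrinking dartboard or sparse-vector-based switching. In each round we follow an expert sampled from the exponential-weights distribution with learning rate $\eta \approx \sqrt{\varepsilon\log N/L}$, where $L$ is a guess for $L^\star$. The sample is redrawn only when the current expert's loss has grown enough, and this switching test is performed with $\abovethreshold$ using Laplace noise. The key point is that under an oblivious adversary, privacy only has to be charged for the \emph{switches}, not for every round. We will show that the expected number of switches is $O(\eta L^\star + \log N)$-like, since a switch is triggered roughly each time the followed expert incurs loss of order $1/\eta$ over the scale of the threshold. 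Each switch costs an $\varepsilon_0$ in privacy, so by basic composition we can set $\varepsilon_0 \approx \varepsilon/(\#\text{switches})$. Balancing the resulting regret $\log N/\eta + \eta L^\star$ against the privacy budget gives the term $(\sqrt{\log N\log T}+\log N)\sqrt{L^\star/\varepsilon}$. The additive $\iota_1$ comes from the $\abovethreshold$ noise scale $O(\log(NT)/\varepsilon_0)$ paid per epoch.

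The steps, in order, are as follows.

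\textbf{Step 1.} We fix a guess $L$ and describe the lazy sampler. We prove its pure DP by viewing it as a composition of at most $K$ runs of $\abovethreshold$ plus $K$ exponential-mechanism draws, where the draws use weights $\propto e^{-\eta \cdot(\text{cumulative loss})}$. Neighboring streams change one $\ell_t$, so each draw has sensitivity $\eta$, and we need $K\eta \lesssim \varepsilon$ together with $K\varepsilon_0\lesssim\varepsilon$. After $K$ switches the algorithm halts and restarts, or moves to the next guess.

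\textbf{Step 2.} We bound the regret of the lazy sampler. We use the standard fact that lazy sampling with the right resampling probability has the same marginal distribution as ordinary MW, giving regret $\log N/\eta+\eta L^\star$. We then add the excess loss from the noisy threshold, which is at most $O(\log(NT)/\varepsilon_0)$ per switch.

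\textbf{Step 3.} We run the doubling trick: guesses $L=2^j$, advancing when a private estimate of the best expert's loss (via $\abovethreshold$ on $\min_i$ cumulative loss, sensitivity 1) exceeds $L$. This creates $O(1+\ln(1+L^\star))$ epochs, which explains the multiplicative factor on $\iota_1$. The privacy budget is split across epochs with geometric weights, or we use the fact that each epoch's sensitivity scales with its $\eta$, so that the total is $\varepsilon$. Summing the epoch regrets gives a geometric series dominated by the last epoch.

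The main obstacle is Step 1 together with bounding the switch count. Privacy depends on the number of switches $K$, but $K$ is random and data-dependent, while we need a worst-case DP guarantee. Our first attempt will be to cap $K$ deterministically at $C(\eta L+\log N)$ per epoch, which enforces privacy, and to show via a martingale argument that on oblivious sequences the cap is exceeded only with small probability, in which case we restart. Making the resampling step itself private is a further difficulty, since its probability depends on the losses. We will handle it by coupling the resampling coin with the exponential-mechanism sensitivity $\eta\cdot\ell_t(i)\le\eta$ per round. We will also verify that, because the adversary is oblivious, these per-round costs along the followed expert sum to $\eta\cdot(\text{its loss})$, which is $O(\eta L)$ in total rather than $\eta T$.
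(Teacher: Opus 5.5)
\textbf{Gap 1: composing the $\abovethreshold$ runs.} The central gap is your privacy accounting for these runs. You compose them, charging $\varepsilon_0$ per run with $K\varepsilon_0\le\varepsilon$, so the threshold noise is $\alpha\approx\log T/\varepsilon_0=\Omega(K\log T/\varepsilon)$. Between checks the followed expert may then accumulate $\Delta\approx\tau+\alpha\ge\alpha$ loss. By your own Step~2 accounting (about $\Delta$ excess per switch, equivalently the $e^{O(\eta\Delta)}$ distortion relative to MW behind the paper's $O(\log N/\eta+\eta\Delta L^\star)$), the excess is $K\Delta=\Omega(\eta^2(L^\star)^2\log T/\varepsilon)$, since the cap needs $K=\Omega(\eta L^\star)$. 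For $\eta\approx\sqrt{\varepsilon\log N/L^\star}$ this is of order $L^\star\log N\log T$, and even the best $\eta$ gives only $\tilde O((L^\star)^{2/3})$. Your overview's claim that this noise is paid once per epoch contradicts Step~2.

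Worse, a run ends in a resample only with probability $1-e^{-\eta r_u}\approx\eta\tau$. This must be $\ll 1$, since otherwise $\eta\Delta L^\star$ is already linear, so there are roughly $L^\star/\tau$ runs, not $K$.

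The missing idea is that no composition over runs is needed. Conditioned on any common transcript prefix, the $\abovethreshold$ instances and the stay/resample coins act on disjoint windows $[t_{u-1},t_u-1]$. A changed loss $\ell_{\tilde t}$ therefore enters exactly one instance and one coin, and every later instance starts after $\tilde t$ and sees identical queries. The chain rule then makes all $\abovethreshold$ instances together $\varepsilon_1$-DP and all coins together $\max(\eta,\varepsilon_2)$-DP, however many there are. Only the at most $V=O(\log N+\eta L+\log T)$ exponential-mechanism resamples are composed, via $2\eta V\le\varepsilon/3$. This is what permits $\varepsilon_1=\Theta(\varepsilon)$ and $\tau=O(\log T/\varepsilon)$. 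The stated bound then comes from two sources: balancing $\log N/\eta$ against $\eta\tau L^\star$ gives $\sqrt{L^\star\log N\log T/\varepsilon}$, and the constraint $\eta=O(\sqrt{\varepsilon/L^\star})$ forced by $2\eta V\le\varepsilon/3$ gives $\log N\sqrt{L^\star/\varepsilon}$. It does not come from $\log N/\eta+\eta L^\star$ alone.

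\textbf{Gap 2: the resampling coin and the epoch budget.} Your treatment of the coin fails under pure DP. When one loss changes, the stay probability $e^{-\eta r}$ moves by at most a factor $e^{\eta}$. The resample probability $1-e^{-\eta r}$, however, has unbounded ratio near $r=0$: it is $0$ for $r=0$ and $1-e^{-\eta}$ for $r=1$. A high-probability lower bound on $r$ from $\abovethreshold$ cannot yield a worst-case guarantee. Saying that per-round costs along the followed expert sum to $\eta$ times its loss is also not a DP argument, since neighbors differ in a single loss.

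The paper floors the parameter, $r_u=\max(1/(e^{\varepsilon_2}-1),\cdot)$, so that $(1-e^{-\eta r'})/(1-e^{-\eta r})\le r'/r\le 1+1/r\le e^{\varepsilon_2}$ whenever $r\le r'\le r+1$. Since $\tau\ge\alpha+1/(e^{\varepsilon_2}-1)$, the floor is inactive on the accuracy event, so the regret analysis is unaffected.

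Finally, in Step~3 you should not split the budget across epochs: a geometrically decaying split leaves the last, dominant epoch with a budget polynomially small in $L^\star$. The epoch boundaries come from a chain of $\abovethreshold$ instances that is $\varepsilon/2$-DP by the same disjoint-window argument. Given the boundaries, the base learners run on disjoint subsequences, so parallel composition gives each of them the full $\varepsilon/2$. The factor $1+\ln(1+L^\star)$ on $\iota_1$ is simply the number of epochs.
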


\begin{theorem}
\label{thm:main_approx}
    Let $\ell_1,\dots,\ell_T$ be chosen by an oblivious adversary. Given $\varepsilon\le 1$ and $0<\delta \le 1/2$, there exists an $(\varepsilon,\delta)$-differentially private algorithm for OPE with expected regret
    \begin{equation*}
        O\left(\sqrt{\frac{L^\star\log N\log T}{\varepsilon}} + \log N\cdot \min\left(\frac{(L^\star\log (1/\delta))^{1/3}}{\varepsilon^{2/3}}, \sqrt{\frac{L^\star}{\varepsilon}}\right)\right) + (1+\ln (1+L^\star)) \cdot \iota_2,
    \end{equation*}
    where $\iota_2 = O\left(\log^{3/2}N\sqrt{\log(\min(1/\delta, N))}/\varepsilon + \log N \log T/\varepsilon\right)$.
\end{theorem}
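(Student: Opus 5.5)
Our plan for Theorem~\ref{thm:main_approx} follows the same skeleton as Theorem~\ref{thm:main_pure}. The only change is to replace the pure-DP sparse-vector component with its advanced-composition counterpart.

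\textbf{Low-switching core.} The core is a lazy, low-switching algorithm in the spirit of shrinking dartboard / lazy multiplicative weights, run with a \emph{data-dependent} learning rate $\eta \approx \sqrt{\varepsilon\log N/L}$, where $L$ is a running guess of $L^\star$.
\begin{itemize}
\item The learner keeps its current expert until that expert accumulates loss of order $1/\eta$.
\item It then resamples from the exponential-weights distribution over cumulative losses.
\end{itemize}
In the oblivious setting, an expert whose loss exceeds a threshold can be detected privately with $\abovethreshold$ applied to the cumulative loss of the current expert. Only the time of a switch and the resampled index leak information.

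\textbf{Privacy accounting.} A single loss vector changes each cumulative loss by at most $1$, so each resampling step is $O(\eta)$-DP by the exponential mechanism. Each $\abovethreshold$ instance costs $O(\varepsilon_0)$. With $K$ switches in total:
\begin{itemize}
\item Basic composition gives $O(K(\eta+\varepsilon_0))$. This is the pure bound and yields the $\sqrt{L^\star/\varepsilon}$ branch of the minimum.
\item Advanced composition gives $O\bigl(\sqrt{K\log(1/\delta)}(\eta+\varepsilon_0)\bigr)$. This yields the $(L^\star\log(1/\delta))^{1/3}/\varepsilon^{2/3}$ branch of the minimum.
\end{itemize}
The $\sqrt{\log(\min(1/\delta,N))}$ factor in $\iota_2$ is handled as follows. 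When $\delta < 1/N$, we use the pure sparse-vector analysis instead, since a $\log N$ factor is already paid there.

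\textbf{Regret.} The analysis has two parts.
\begin{itemize}
\item The lazy scheme has regret $O(\eta L^\star + \log N/\eta)$, as in multiplicative weights, plus the cost of $\abovethreshold$ noise. That cost is $O(\log T/\varepsilon_0)$ additive error per switch.
\item The number of switches is $K = O(\eta L^\star + \log N)$, because each switch certifies that the chosen expert lost about $1/\eta$.
\end{itemize}
To reach the stated form, we balance $\eta$ against the privacy budget and optimize over the two composition regimes.

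\textbf{Removing the assumption that $L^\star$ is known.} We use a doubling trick on $L$. An epoch ends when the private estimate of the best expert's cumulative loss doubles; this estimate is itself maintained with a sparse-vector instance. There are $O(1+\ln(1+L^\star))$ epochs. Each epoch contributes the additive term $\iota_2$, and the privacy budget is split across epochs. Obliviousness ensures that the per-epoch regret bounds hold in expectation even though epoch boundaries are random.

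\textbf{Main obstacle.} The key difficulty is showing that the number of $\abovethreshold$ restarts, and hence the composition count $K$, is bounded in terms of $L^\star$ rather than $T$. Noisy thresholds could cause spurious switches. We would first handle this with a high-probability bound on the Laplace noise, combined with a threshold margin of order $\log T/\varepsilon_0$. The failure event is then charged to the additive $\log N\log T/\varepsilon$ term.
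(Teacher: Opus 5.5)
Your low-switching core does not achieve the regret you claim for it. As you describe it, the learner keeps its expert until that expert has accumulated loss $\theta=\Theta(1/\eta)$ and then draws a fresh sample from $P_t$. With such a deterministic trigger, the output marginals can differ from those of multiplicative weights by a constant factor, and that factor is paid on all of $L^\star$.

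Here is a concrete instance. Take $N=2$ and split time into phases of $\theta$ rounds. In odd phases expert $1$ suffers loss $1$ per round and expert $2$ suffers $0$; in even phases the roles are reversed.
\begin{itemize}
\item By induction, switches occur only at phase boundaries.
\item At every boundary, $P_t$ puts probability at least $1/2$ on the loser of the coming phase.
\item Hence each renewal cycle contains exactly one lossy phase and has expected length at most $3/2$ phases.
\end{itemize}
The learner therefore loses about $2T/3$, while $L^\star\approx T/2$. That is regret $\Omega(L^\star)$, which is far above $O(\eta L^\star+\log N/\eta)$ when $\eta=o(1)$, and in particular in your regime $\eta\approx\sqrt{\varepsilon\log N/L^\star}$.

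The missing idea is the shrinking-dartboard step of Algorithm~\ref{alg:main}. There, $\abovethreshold$ fires at a small threshold $\tau=\Theta(\log T/\varepsilon)$, not at $1/\eta$. The learner then keeps its expert with probability $e^{-\eta r_u}$ and resamples otherwise. Through the auxiliary Algorithm~\ref{alg:aux} and Lemma~\ref{lem:aux}, this makes the marginals pointwise at most $e^{2\eta\Delta}=1+O(\eta\tau)$ times those of multiplicative weights. That gives regret $O(\log N/\eta+\tau\eta L^\star)$, which is the source of the $\sqrt{L^\star\log N\log T/\varepsilon}$ term. The number of actual resamples is then bounded by the potential argument of Lemma~\ref{lem:update}, not by ``each switch certifies $1/\eta$ loss''.

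Once you adopt that fix, your privacy accounting breaks. $\abovethreshold$ now halts once per $\Theta(\tau)$ units of the learner's loss, typically $\Theta(L^\star/\tau)$ times. That is far more than the $O(\log N+\eta L^\star)$ resamples, so paying $\varepsilon_0$ per instance is unaffordable even under advanced composition. The key observation in Lemma~\ref{lem:privacy} is that along any common transcript the query intervals are disjoint, so a single changed loss touches only one instance and one coin. As a result:
\begin{itemize}
\item the whole collection of $\abovethreshold$ instances is $\varepsilon_1$-DP;
\item all the coins $Z_u$ together are $\max(\eta,\varepsilon_2)$-DP, where the floor $r_u\ge 1/(e^{\varepsilon_2}-1)$ is what bounds $\Pr[Z_j=0]/\Pr[Z'_j=0]$.
\end{itemize}
Only the resampling steps are composed, and that requires a hard cap $V$ on resamples. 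Privacy must hold for every pair of neighboring inputs, including those with $L^\star$ far above the guess, so a failure event cannot be ``charged to the regret''. Lemma~\ref{lem:update} is used only to show that the cap is hit with probability at most $1/T$ when $L'\ge L^\star$.

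Two smaller corrections:
\begin{itemize}
\item In the doubling step you should not split the budget across epochs. The epoch boundaries form one $\abovethreshold$ chain costing $\varepsilon/2$ in total, and the per-epoch learners run on disjoint segments, so they parallel-compose at full budget. Splitting would add $\log T$ factors the theorem does not have.
\item The $\sqrt{\log(\min(1/\delta,N))}$ factor in $\iota_2$ comes from taking the better of basic and advanced composition on the $O(\log N)$ portion of $V$ (the choice of $\eta_1$). It does not come from a sparse-vector analysis.
\end{itemize}
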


\paragraph{Comparison with previous results.} When $L^\star = 0$, our results recover the bounds obtained by~\citet{asi2023near} for the realizable setting. When $L^\star$ grows, our algorithm yields a quadratic improvement over their $\tilde{O}(L^\star)$ regret bound. Compared to the $\tilde{O}(\sqrt{T})$ regret obtained by~\citet{asi2023private,asi2024private}, our results are significantly better when $L^\star$ is much smaller than $T$.

As shown by~\citet{asi2023private}, the above improvement cannot be achieved against an adaptive adversary under pure DP: any OPE algorithm must incur an $\Omega(T)$ regret even if $L^\star = 0$. We additionally prove that it remains impossible under approximate DP by providing an $\tilde{\Omega}(\sqrt{T})$ lower bound, which also holds when $L^\star = 0$.

\begin{theorem}
    Let $T$ be sufficiently large and $N\ge 2T$. Any adaptively $(\varepsilon,\delta)$-differentially private OPE algorithm with $\varepsilon\le 1$ and $\delta\le 1/T^3$ must incur an $\Omega(\min(T,\sqrt{T}/(\varepsilon\log T)))$ expected regret when $\ell_1,\dots,\ell_T$ are chosen by an adaptive adversary, even if it is promised that $L^\star=0$.
\end{theorem}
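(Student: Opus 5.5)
We reduce from a private one-way marginals / fingerprinting-code lower bound, in the style of the adaptive lower bound of \citet{asi2023private} and the ``private sequential prediction'' lower bounds built on fingerprinting codes. The idea is to let an adaptive adversary force the learner, in each round, to commit to an expert that it cannot have learned about without looking closely at one specific user. Concretely, we use $N\ge 2T$ experts and split time into rounds. In each round the adversary chooses the loss vector \emph{after} seeing which expert the learner has been playing, so that the best expert keeps loss $0$ while the learner is pushed toward mistakes. To rule out the trivial strategy of always choosing a zero-loss expert, the set of ``still-perfect'' experts will be encoded by private data. Each loss function $\ell_t$ is a data point, and privacy is with respect to changing one $\ell_t$.

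The key steps are as follows.

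\textbf{Step 1: construction.} We take a hard instance where the adversary maintains a set $S_t$ of candidate experts, all with zero loss so far. In round $t$ the adversary sees the learner's current distribution, or the realized $i_t$ (we are allowed to adapt to outputs). It then eliminates a fresh expert: it assigns loss $1$ to the expert the learner is most likely to pick, provided at least one other expert in $S_t$ survives. Since $N\ge 2T$ and at most one expert is removed per round, some expert retains loss $0$, so $L^\star=0$ holds throughout.

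\textbf{Step 2: forcing regret.} A non-private learner avoids this by tracking $S_t$. The point is that the identity of the eliminated expert is revealed only through the single loss $\ell_t$. Changing one $\ell_t$, which means moving the $1$ to a different expert, changes which experts remain perfect. We embed a fingerprinting-code / one-way-marginals instance of dimension about $\sqrt{T}$, via the reduction of \citet{asi2023private} (or Bun--Ullman--Vadhan style tracing). Correctly avoiding eliminated experts in each of $k$ ``phases'' then amounts to answering $k$ adaptively chosen queries on a dataset of size $n$. With $n\approx T/k$ losses per phase, the known lower bound for answering $k$ adaptive statistical queries under $(\varepsilon,\delta)$-DP, namely $n=\tilde\Omega(\sqrt{k}/\varepsilon)$, forces a constant error rate per phase unless $T/k\gtrsim \sqrt{k}/\varepsilon$. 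Each error costs a constant amount of loss, so the regret is either $\Omega(T)$ or $\tilde\Omega(\sqrt{T}/\varepsilon)$. This yields $\Omega(\min(T,\sqrt T/(\varepsilon\log T)))$, with $\delta\le 1/T^3$ needed for the tracing attack's union bound.

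\textbf{Step 3: adaptive composition.} We verify that the algorithm's privacy, as a function of the losses given adaptively, transfers to the simulated query-answering mechanism. This holds because the reduction uses the adversary adaptively, which is exactly why the result fails in the oblivious setting.

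\textbf{Main obstacle.} The hardest step is Step 2: designing the loss encoding so that the $0$-loss guarantee holds, since fingerprinting constructions usually produce noisy, non-realizable losses. My first attempt would be to have the adversary reveal a coordinate's value only after the learner commits, and to assign loss $1$ solely to experts inconsistent with the revealed bit. The remaining $\Omega(N)$ consistent experts then stay at zero loss.
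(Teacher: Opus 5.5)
There is a genuine gap. Step~2 carries the whole argument, but it is a heuristic rather than a reduction, and its accounting does not hold together.

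A fingerprinting or one-way-marginals bound $n=\tilde\Omega(\sqrt{k}/\varepsilon)$ needs the \emph{same} $n$ users to be queried $k$ times. In OPE, each data point is one loss function, revealed once. If your $k$ phases use disjoint blocks of $T/k$ losses, each phase is a single query on its own data, and parallel composition removes any $\sqrt{k}$ penalty.

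Read literally, your criterion also proves too much. With $k=T$ and $n=1$, the condition $T/k\gtrsim\sqrt{k}/\varepsilon$ fails for every $\varepsilon\le 1$, so you would conclude $\Omega(T)$ regret for all $\varepsilon\le 1$. That is false: already for $N=2T$ and $\delta=T^{-3}$, multiplicative weights with advanced composition is private against adaptive adversaries and has regret $\tilde O(\sqrt T/\varepsilon)$.

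Several other pieces are also missing:
\begin{itemize}
\item The passage to $\tilde\Omega(\sqrt T/\varepsilon)$ is never derived.
\item Step~3 is only asserted.
\item Your realizability fix fails on its own terms. If each revealed bit makes a constant fraction of the surviving experts inconsistent, the zero-loss set shrinks geometrically. With $N=2T$ it is exhausted after $O(\log T)$ reveals, so $\Omega(N)$ consistent experts cannot survive $T$ rounds.
\item In Step~1, the adversary may not see the realized $i_t$ before choosing $\ell_t$. If it could, $\Omega(T)$ loss would be trivial for every learner, private or not.
\end{itemize}

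The missing idea is that no new hard instance is needed. The paper uses the high-privacy bound of \citet{asi2023private} as a black box: for $\varepsilon\le 1/(\sqrt{T}\ln T)$ and $\delta\le T^{-3}$, every adaptively private learner suffers $\Omega(T)$ loss on some realizable adaptive sequence. It then pushes an arbitrary $\A$ into that regime by Poisson subsampling: $\A_q$ runs $\A$ on $Z_t\ell_t$ with $Z_t\sim\ber(q)$ and forwards its outputs. Two facts then suffice.
\begin{itemize}
\item $\A_q$ is adaptively $(\ln(1+q(e^\varepsilon-1)),q\delta)$-DP. The paper proves this by splitting the view distribution as $(1-q)P^-+qP^+_b$ according to $Z_{\tilde t}$. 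It then builds adversaries against $\A$ showing that $P^+_0$, $P^+_1$ and $P^-$ are pairwise $(\varepsilon,\delta)$-close.
\item Subsampling preserves realizability, since $Z_t\ell_t(i^\star)=0$. Because $Z_t$ is independent of $(i_t,\ell_t)$, the expected loss of $\A_q$ is at most $R_\A(T)/q$.
\end{itemize}
Taking $q=\min(1,1/(2\varepsilon\sqrt{T}\ln T))$ puts $\A_q$ in the high-privacy regime. Hence $R_\A(T)=\Omega(qT)=\Omega(\min(T,\sqrt T/(\varepsilon\log T)))$. This route avoids both the realizable encoding and the $\varepsilon$-dependence you were trying to extract from the fingerprinting bound.
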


\subsection{Related work}

In the non-private setting, the OPE problem has been extensively studied. The multiplicative weights algorithm~\citep{littlestone1994weighted,vovk1990aggregating} was shown to achieve a data-dependent regret bound of $O(\sqrt{L^\star \log N} + \log N)$ with an appropriately tuned learning rate~\citep{freund1997decision}. There are also other methods with the same regret bound, such as follow-the-perturbed-leader~\citep{hannan1957approximation,kalai2005efficient,hutter2005adaptive}. It was formally proved by~\citet{cesa1997use} that such a bound is asymptotically optimal.

The private version of OPE was first considered by~\citet{dwork2010differential}. It is a folklore result that the multiplicative weights method preserves differential privacy, which follows from the exponential mechanism~\citep{mcsherry2007mechanism} and advanced composition~\citep{dwork2010boosting}. A series of subsequent works~\citep{smith2013nearly,jain2014near,agarwal2017price,asi2023near,asi2023private,asi2024private} further investigated the problem. In the oblivious adversarial setting considered in this paper,~\citet{asi2023private} proposed algorithms with regret $\tilde{O}(\sqrt{T})$ under pure DP and $O(\sqrt{T\log N}) + \tilde{O}(T^{1/3})$ under approximate DP, demonstrating that the cost of privacy is negligible. The bound for approximate DP was later refined by~\citet{asi2024private}, who further optimized the dependence on the privacy parameter $\varepsilon$. However, these results are unfavorable when $L^\star$ is much smaller than $T$.~\citet{asi2023near} studied the realizable setting, where $L^\star = 0$, and gave algorithms with $\tilde{O}(1)$ regret. Their algorithms can be extended to the agnostic setting where $L^\star > 0$. Nevertheless, the resulting regret bound is $\tilde{O}(L^\star)$, which is far from optimal.
\section{Preliminaries}

Online prediction from experts can be formulated as a $T$-round game played between a learner and an adversary. In this work, we assume an oblivious adversary who determines a sequence of loss functions $\ell_1,\dots,\ell_T$ in advance and releases them to the learner one by one, where every $\ell_t:[N]\to [0,1]$ assigns each of the $N$ experts a loss in $[0,1]$. In round $t$, the learner chooses an expert $i_t\in[N]$ before receiving $\ell_t$ and incurs loss $\ell_t(i_t)$. The performance of the learner is assessed by the expected regret, defined as
\begin{equation*}
    \mathbb{E}\left[\sum_{t=1}^{T}\ell_t(i_t) - \min_{i\in[N]}\sum_{t=1}^T\ell_t(i)\right].
\end{equation*}
For ease of notation, we write $L^\star$ to denote $\min_{i\in[N]}\sum_{t=1}^T\ell_t(i)$. Note that $L^\star$ is determined in advance because the adversary is oblivious. However, it is unknown to the learner until the end.

One common approach to solving OPE is the celebrated multiplicative weights method, which in each round $t$ samples $i_t$ such that
\begin{equation*}
    \Pr[i_t=i] = \frac{e^{-\eta\sum_{s=1}^{t-1}\ell_s(i)}}{\sum_{i'\in[N]}e^{-\eta\sum_{s=1}^{t-1}\ell_s(i')}},
\end{equation*}
where $\eta$ is the learning rate. The following lemma provides an upper bound on its expected regret.

\begin{lemma}[\citep{freund1997decision,cesa2006prediction}]
\label{lem:hedge}
    For $0<\eta \le 1$, we have
    \begin{equation*}
        \sum_{t=1}^T\sum_{i\in[N]}P_t(i)\ell_t(i) - L^\star =O\left(\frac{\log N}{\eta} + \eta L^\star\right),
    \end{equation*}
    where $P_t(i) = \frac{e^{-\eta\sum_{s=1}^{t-1}\ell_s(i)}}{\sum_{i'\in[N]}e^{-\eta\sum_{s=1}^{t-1}\ell_s(i')}}$.
\end{lemma}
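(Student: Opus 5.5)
We prove the statement with the standard potential-function argument for multiplicative weights. Let $L_t(i)=\sum_{s=1}^{t}\ell_s(i)$ with $L_0(i)=0$, and let $W_t=\sum_{i\in[N]}e^{-\eta L_t(i)}$, so that $W_0=N$ and $P_t(i)=e^{-\eta L_{t-1}(i)}/W_{t-1}$. The plan is to bound $\ln(W_T/W_0)$ from above by a quantity involving the learner's expected losses $\hat\ell_t:=\sum_i P_t(i)\ell_t(i)$. We then bound it from below by $-\eta L^\star-\ln N$, and rearrange.

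\emph{Lower bound.} Let $i^\star$ attain $L^\star$. Since $W_T\ge e^{-\eta L_T(i^\star)}=e^{-\eta L^\star}$, we get $\ln(W_T/W_0)\ge -\eta L^\star-\ln N$.

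\emph{Upper bound.} For each round,
\begin{equation*}
\frac{W_t}{W_{t-1}}=\sum_i P_t(i)e^{-\eta\ell_t(i)}.
\end{equation*}
Because $\ell_t(i)\in[0,1]$ and $x\mapsto e^{-\eta x}$ is convex, we have $e^{-\eta x}\le 1-(1-e^{-\eta})x$ on $[0,1]$. Hence $W_t/W_{t-1}\le 1-(1-e^{-\eta})\hat\ell_t\le \exp(-(1-e^{-\eta})\hat\ell_t)$. Summing over $t$ gives $\ln(W_T/W_0)\le-(1-e^{-\eta})\sum_t\hat\ell_t$.

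\emph{Combining.} The two bounds give
\begin{equation*}
\sum_t\hat\ell_t\le \frac{\eta L^\star+\ln N}{1-e^{-\eta}}.
\end{equation*}
For $0<\eta\le1$, use $1-e^{-\eta}\ge \eta-\eta^2/2$, so that
\begin{equation*}
\frac{\eta}{1-e^{-\eta}}\le \frac{1}{1-\eta/2}\le 1+\eta
\end{equation*}
for $\eta\le 1$. Also $1/(1-e^{-\eta})\le (1+\eta)/\eta\le 2/\eta$. Therefore $\sum_t\hat\ell_t\le (1+\eta)L^\star+2\ln N/\eta$, and subtracting $L^\star$ gives the claimed bound $O(\log N/\eta+\eta L^\star)$.

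The main point needing care is the upper bound step. We want regret scaling with $\eta L^\star$ rather than $\eta T$, so we must avoid the Hoeffding-type bound $\ln\mathbb{E}e^{-\eta X}\le -\eta\mathbb{E}X+\eta^2/8$, whose second-order term summed over $T$ rounds would give $\eta T$. The convexity (chord) bound instead keeps the second-order term proportional to $\hat\ell_t$. That term is then absorbed by the factor $1/(1-e^{-\eta})$ on the left-hand side, which yields the multiplicative $(1+\eta)L^\star$ and hence the small-loss bound. The constant $\eta\le1$ is used only to make $\eta/(1-e^{-\eta})\le1+\eta$ valid.
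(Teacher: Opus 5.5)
Your argument is correct and is the classical Freund--Schapire Hedge analysis (chord bound with $\beta=e^{-\eta}$, giving $\sum_t\hat\ell_t\le(\ln N+\eta L^\star)/(1-e^{-\eta})$). The paper gives no proof of its own and simply cites this analysis. One small correction to your closing remark: $\eta/(1-e^{-\eta})\le 1+\eta$ holds for every $\eta>0$, since it is equivalent to $e^{\eta}\ge 1+\eta$, so the hypothesis $\eta\le 1$ is actually what you need for $(1+\eta)/\eta\le 2/\eta$.
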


We then introduce the notion of differential privacy in the context of OPE.
\begin{definition}[Differential Privacy~\citep{dwork2006calibrating,dwork2006our}]
    An algorithm $\A$ is $(\varepsilon,\delta)$-differentially private if, for any pair of sequences $S=(\ell_1,\dots,\ell_T)$ and $S'=(\ell_1',\dots,\ell_T')$ that differ in a single function, and for any subset $O$ of the output range, we have
    \begin{equation*}
        \Pr[\A(S)\in O]\le e^\varepsilon\Pr[\A(S')\in O] + \delta.
    \end{equation*}
    When $\delta =0$, we usually omit $\delta$ and say $\A$ is $\varepsilon$-differentially private. This case is referred to as pure DP, while the case where $\delta > 0$ is referred to as approximate DP.
\end{definition}



A key ingredient in our design is the $\abovethreshold$ mechanism, which receives a sequence of queries and privately identifies the first one that exceeds a prespecified threshold. 
\begin{theorem}[$\abovethreshold$~\citep{dwork2009complexity,dwork2014algorithmic}]
\label{thm:above}
    We say a query $q$ has sensitivity $1$ if $\lvert q(S) - q(S')\rvert\le 1$ for any pair of sequences $S$ and $S'$ that differ in a single function. Given a threshold $\tau$, there exists an $\varepsilon$-differentially private mechanism $\abovethreshold$ that at each round $t$ receives a sensitivity-$1$ query $q_t$ and outputs $\sigma_t\in\{\top,\bot\}$, continuing until it either produces the first $\top$ or has processed all queries. If there are at most $T$ queries, then with probability $1-\beta$, the $\abovethreshold$ mechanism is $\alpha$-accurate for 
    \begin{equation*}
        \alpha = \frac{8(\ln T+\ln(2/\beta))}{\varepsilon}.
    \end{equation*}
    Namely, if the mechanism halts after $k$ rounds, we have $q_t\ge \tau - \alpha$ for all $t\in[k]$ with $\sigma_t=\top$ and $q_t\le \tau + \alpha$ for all $t\in[k]$ with $\sigma_t=\bot$.
\end{theorem}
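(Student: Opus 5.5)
The statement to prove is the standard $\abovethreshold$ theorem (Theorem~\ref{thm:above}). I would prove it with the sparse vector construction. First draw a noisy threshold $\hat\tau = \tau + \rho$ with $\rho\sim\lap(2/\varepsilon)$. Then, for each incoming query $q_t$, draw fresh noise $\nu_t\sim\lap(4/\varepsilon)$ and output $\top$ (and halt) if $q_t(S)+\nu_t\ge\hat\tau$, and $\bot$ otherwise. The proof has two independent parts, privacy and accuracy, which I would handle in that order.

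\textbf{Privacy.} Fix neighbouring sequences $S,S'$ and any output transcript. Such a transcript is determined by its halting index $k$: it reads $\bot^{k-1}\top$, or all $\bot$ if the mechanism never halts. Condition on the noises $\nu_1,\dots,\nu_{k-1}$. Define $g(S)=\max_{t<k}(q_t(S)+\nu_t)$. The event of outputting this transcript on $S$ is $\{\hat\tau> g(S)\}\cap\{q_k(S)+\nu_k\ge\hat\tau\}$. Now couple the two executions by shifting $\hat\tau'=\hat\tau+1$ and $\nu_k'=\nu_k+2$. Because each query has sensitivity $1$, we get $g(S')\le g(S)+1<\hat\tau'$, and $q_k(S')+\nu_k'\ge q_k(S)+1+\nu_k\ge\hat\tau'$. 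So every outcome producing the transcript on $S$ maps to one producing it on $S'$. The Laplace density changes by a factor of at most $e^{\varepsilon/2}$ for each shift: $\hat\tau$ moves by $1$ at scale $2/\varepsilon$, and $\nu_k$ moves by $2$ at scale $4/\varepsilon$. Integrating gives a total factor of $e^{\varepsilon}$. The all-$\bot$ transcript is handled the same way using only the threshold shift. This yields pure $\varepsilon$-DP. The delicate point is that the unbounded number of $\bot$ answers costs nothing, and this is exactly what conditioning on $\nu_{<k}$ and using the max handles.

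\textbf{Accuracy.} Use the Laplace tail bound $\Pr[|\lap(b)|>x]=e^{-x/b}$. We need $|\rho|\le\alpha/2$ and $|\nu_t|\le\alpha/2$ for all $t\le T$. Take a union bound over $T+1$ variables, giving each failure probability $\beta/(T+1)$. Setting $\alpha/2=(4/\varepsilon)\ln(2T/\beta)$ suffices (using $T+1\le 2T$), which gives $\alpha=8(\ln T+\ln(2/\beta))/\varepsilon$. On this good event the two cases follow:
\begin{itemize}
\item if $\sigma_t=\top$, then $q_t\ge\hat\tau-\nu_t\ge\tau-\alpha$;
\item if $\sigma_t=\bot$, then $q_t<\hat\tau-\nu_t\le\tau+\alpha$.
\end{itemize}

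The main obstacle is the privacy coupling, specifically getting the constants right so that the two shifts together cost exactly $\varepsilon$. The accuracy part is a routine union bound.
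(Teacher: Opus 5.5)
Your proposal is correct and follows the standard sparse-vector argument of \citet{dwork2014algorithmic}, which the paper cites rather than reproves, with exactly the noise scales the paper's implementation in Appendix~\ref{sec:delay} uses: $\lap(2/\varepsilon)$ for the threshold and $\lap(4/\varepsilon)$ per query. Two details differ only cosmetically from the textbook proof and give the same constants: you use the fixed shift $(\hat\tau,\nu_k)\mapsto(\hat\tau+1,\nu_k+2)$, which needs only an inclusion of events rather than an exact data-dependent change of variables, and you use a $\beta/(T+1)$ union bound instead of splitting $\beta$ as $\beta/2$.
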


Every step of the multiplicative weights method is exactly the exponential mechanism with privacy parameter $\varepsilon=2\eta$. This property, combined with advanced composition, has been widely used in the literature to develop private OPE algorithms.
\begin{lemma}[Exponential Mechanism~\citep{mcsherry2007mechanism}]
\label{lem:exp}
    Let $P_t$ be defined as in Lemma~\ref{lem:hedge}. The process of sampling $i_t$ from $P_t$ is $2\eta$-differentially private.
\end{lemma}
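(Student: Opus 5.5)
The plan is to regard $i_t \sim P_t$ as an instance of the exponential mechanism over the output range $[N]$, with score (utility) function
\[
u_S(i) = -\sum_{s=1}^{t-1}\ell_s(i),
\]
where $S=(\ell_1,\dots,\ell_T)$ is the input sequence. The sampling rule is then $P_t(i)\propto e^{\eta u_S(i)}$. The one property we need is that $u$ has sensitivity at most $1$: changing a single function $\ell_s$ to $\ell'_s$ shifts each $u_S(i)$ by $\lvert \ell_s(i)-\ell'_s(i)\rvert \le 1$, since losses take values in $[0,1]$. If the modified index satisfies $s\ge t$, then $u$ does not change at all.

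Next, I carry out the usual exponential-mechanism ratio bound directly rather than relying on a black box, since it is short. Fix neighbouring sequences $S,S'$ and an expert $i$. We have
\[
\frac{P_t(i)}{P'_t(i)} = \frac{e^{\eta u_S(i)}}{e^{\eta u_{S'}(i)}}\cdot\frac{\sum_{i'} e^{\eta u_{S'}(i')}}{\sum_{i'} e^{\eta u_S(i')}}.
\]
The first factor is at most $e^{\eta}$ by the sensitivity bound. For the second factor, every term satisfies $e^{\eta u_{S'}(i')}\le e^{\eta}e^{\eta u_S(i')}$, so the ratio of the sums is also at most $e^{\eta}$. Multiplying the two gives $P_t(i)\le e^{2\eta}P'_t(i)$ for every $i$. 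For any set $O\subseteq[N]$, summing over $i\in O$ then yields $\Pr[i_t\in O]\le e^{2\eta}\Pr'[i_t\in O]$, which is exactly $2\eta$-DP with $\delta=0$.

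No step here is a real obstacle. The only point requiring care is that the neighbouring relation is at the level of a single loss function rather than a single coordinate. A single $\ell_s$ affects the cumulative losses of all $N$ experts at once, but each by at most $1$, so the sensitivity argument above covers it. The bound $e^{2\eta}$ is uniform in $t$, so the claim holds for each round's sampling step; the privacy of the full $T$-round sequence is left to composition elsewhere in the paper.
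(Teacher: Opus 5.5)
Your proof is correct and takes the paper's route. The paper gives no separate argument: it identifies sampling from $P_t$ as the exponential mechanism with the sensitivity-$1$ score $-\sum_{s=1}^{t-1}\ell_s(i)$ and cites McSherry--Talwar, and you do the same while writing out the standard two-factor ratio bound $e^{\eta}\cdot e^{\eta}=e^{2\eta}$ explicitly.
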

\section{Algorithm and analysis}

In this section, we present our algorithm and analyze its privacy and regret guarantees. 

Our algorithm is built on the shrinking dartboard algorithm~\citep{geulen2010regret}, which was also leveraged by~\citet{asi2023private} to develop algorithms with $\tilde{O}(\sqrt{T})$ regret. The key insight is that the marginal distribution of the multiplicative weights method changes only slightly between consecutive rounds. Consequently, it suffices to update the output with only a small probability after each round. By reducing the number of updates, one can allocate a larger privacy budget for each update (i.e., use a larger learning rate $\eta$), thereby achieving smaller regret.

However, the shrinking dartboard algorithm is not inherently private because the probability of making an update depends on the underlying data (specifically, $\ell_t(i_t)$, i.e., the loss associated with the currently selected expert $i_t$).~\citet{asi2023private,asi2024private} addressed this by forcing the algorithm to perform an additional update at each round with a probability independent of the data. However, to satisfy even approximate DP, this probability must be at least $\tilde{\Omega}(T^{-1/3})$. By advanced composition, this imposes an $\tilde{O}(1/T^{1/3})$ upper bound on the learning rate $\eta$, which in turn yields an $\tilde{O}(T^{1/3})$ regret bound according to Lemma~\ref{lem:hedge}. This exceeds our desired $\tilde{O}(\sqrt{L^\star})$ regret whenever $L^\star < T^{2/3}$.

We address this by delaying the update until the currently selected expert has incurred a sufficiently large loss, and then making an update with a probability that depends on the cumulative loss accumulated during the delay. Our key consideration here is that, while such a probability is data-dependent, performing an update with this probability preserves privacy. Moreover, the delay before an update can be implemented using the $\abovethreshold$ mechanism, which is also differentially private. In addition, we impose a lower bound on the probability of updating and an upper bound on the number of updates, which are crucial for satisfying DP in the worst case. The overall algorithm is described in Algorithm~\ref{alg:main}. We formally prove its privacy guarantee in Section~\ref{sec:privacy} and regret bound in Section~\ref{sec:regret}.

\begin{algorithm}[!ht]
\label{alg:main}
\DontPrintSemicolon
    \KwInput{privacy parameters $\varepsilon_1$ and $\varepsilon_2$, learning rate $\eta$, threshold $\tau$, limit on updates $V$}
    $u\gets 1$, $v\gets 0$, and $t_0 \gets 1$ \;
    Sample $i_1$ uniformly from $[N]$ \;
    Initiate an instance of $\abovethreshold$ with privacy parameter $\varepsilon_1$ and threshold $\tau$ \;
    Observe $\ell_1$ and suffer loss $\ell_1(i_1)$ \;
    \For{$t=2,\dots,T$}{
        \uIf{$v < V$}{
            Define query $q_t\equiv\sum_{s=t_{u-1}}^{t-1}\ell_s(i_{t-1})$ \;
            Feed $q_t$ to $\abovethreshold$ and obtain outcome $\sigma_t$ \;
            \uIf{$\sigma_t=\top$}{
                $r_u \gets \max(1/(e^{\varepsilon_2} - 1), \sum_{s=t_{u-1}}^{t-1}\ell_s(i_{t-1}))$ \;
                Sample $Z_u$ from $\ber(e^{-\eta r_u})$ \;
                \uIf{$Z_u=1$}{
                    $i_t \gets i_{t-1}$\;
                }
                \Else{
                    Sample $i_t\gets i$ with probability $\frac{e^{-\eta\sum_{s=1}^{t-1}\ell_s(i)}}{\sum_{i'\in[N]}e^{-\eta\sum_{s=1}^{t-1}\ell_s(i')}}$ for $i\in[N]$ \;
                    $v\gets v+1$ \;
                }
                $t_u\gets t$ \;
                $u\gets u + 1$ \;
                Initiate an instance of $\abovethreshold$ with privacy parameter $\varepsilon_1$ and threshold $\tau$ \;
            }
            \Else{
                $i_t\gets i_{t-1}$ \;
            }
        }
        \Else{
            $i_t\gets i_{t-1}$ \;
        }
        Observe $\ell_t$ and suffer loss $\ell_t(i_t)$ \;
    }
\caption{Private online prediction from experts}
\end{algorithm}

\subsection{Privacy analysis}
\label{sec:privacy}
Let $\comp_\delta(k, \varepsilon_0)$ be the smallest $\varepsilon$ such that the composition of $k$ $\varepsilon_0$-differentially private algorithms is $(\varepsilon,\delta)$-differentially private. We prove the privacy guarantee of Algorithm~\ref{alg:main} in the following lemma.

\begin{lemma}
\label{lem:privacy}
    Algorithm~\ref{alg:main} is $(\varepsilon,\delta)$-differentially private for $\varepsilon = \varepsilon_1 + \max(\eta,\varepsilon_2) + \comp_\delta(V,2\eta)$.
\end{lemma}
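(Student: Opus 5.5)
We decompose the transcript of Algorithm~\ref{alg:main} into three parts and charge each part against one of the three terms in the bound. The parts are: (i) the sequence of $\abovethreshold$ outcomes $\sigma_t$; (ii) the Bernoulli variables $Z_u$ that decide whether to resample at each $\top$; and (iii) the at most $V$ resampling steps drawn from the multiplicative-weights distribution. Fix neighboring sequences $S,S'$ that differ only in $\ell_{t^\ast}$. We view the algorithm as an adaptive composition. Each component sees only the previously released outputs, namely the chosen experts and the times $t_u$, and not the losses themselves. Conditioned on this history, the queries $q_t$ and the values $r_u$ are deterministic functions of the data.

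Part (i): $\abovethreshold$ cost. Every query $q_t=\sum_{s=t_{u-1}}^{t-1}\ell_s(i_{t-1})$ has sensitivity $1$ given the past, since losses lie in $[0,1]$. Moreover, the intervals $[t_{u-1},t-1]$ belonging to different $\abovethreshold$ instances are disjoint. Hence $\ell_{t^\ast}$ enters the queries of exactly one instance, the one whose window contains $t^\ast$. The other instances receive identical queries under $S$ and $S'$ and contribute no privacy loss. By Theorem~\ref{thm:above}, that single instance is $\varepsilon_1$-DP. The delicate point is the adaptivity: which instance is affected depends on earlier outputs. We handle this by conditioning on the history up to the start of each instance. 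This is the standard parallel-composition argument for sparse vector restarted on disjoint data, and it yields total cost $\varepsilon_1$.

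Part (ii): Bernoulli cost. Only the single $Z_u$ whose window contains $t^\ast$ depends on $\ell_{t^\ast}$. For that $u$, the values $r_u$ and $r_u'$ differ by at most $1$, and both are at least $c:=1/(e^{\varepsilon_2}-1)$.
\begin{itemize}
\item Outcome $Z_u=1$: the ratio of probabilities is $e^{-\eta(r_u-r_u')}\le e^{\eta}$.
\item Outcome $Z_u=0$: the ratio is $(1-e^{-\eta r})/(1-e^{-\eta r'})$. We want to show it is at most $e^{\varepsilon_2}$, or at most $e^{\eta}$ where that is the binding constraint.
\end{itemize}
For the second case we use concavity of $x\mapsto 1-e^{-\eta x}$. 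The ratio $f(r+1)/f(r)$ is decreasing in $r$, so the worst case is $r=c$. Concavity together with $f(0)=0$ gives $f(c+1)/f(c)\le (c+1)/c=e^{\varepsilon_2}$. This part is $\max(\eta,\varepsilon_2)$-DP, and we expect this calculation to be the main technical step.

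Part (iii): resampling cost. Each resampling is exactly the exponential mechanism with rate $\eta$, so by Lemma~\ref{lem:exp} it is $2\eta$-DP. The counter $v$ caps the number of resamplings at $V$, and this cap holds on every execution path. A non-resampling step sets $i_t\gets i_{t-1}$ and leaks nothing further. Adaptive composition over at most $V$ such mechanisms therefore gives $(\comp_\delta(V,2\eta),\delta)$-DP; if fewer than $V$ occur, we pad with dummy mechanisms.

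Finally, we combine the three parts by basic composition. Parts (i) and (ii) are pure and part (iii) is approximate, which gives $(\varepsilon_1+\max(\eta,\varepsilon_2)+\comp_\delta(V,2\eta),\delta)$-DP. We also verify that the output sequence $(i_t)$ is a post-processing of $(\sigma_t)$, $(Z_u)$, and the resampled experts, which completes the argument.
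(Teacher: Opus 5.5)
Your proposal follows the paper's proof essentially step for step: the same three-way split of the augmented transcript, the same disjoint-window argument that at most one $\abovethreshold$ instance and at most one $Z_j$ see the differing loss, the same ratio bounds $e^{\eta}$ and $e^{\varepsilon_2}$ (your concavity argument is exactly the paper's inequality $\frac{1-e^{-y}}{1-e^{-x}}\le \frac{y}{x}$), and the same composition over at most $V$ exponential-mechanism steps. The one step to state more carefully is the final combination: the three components are interactive mechanisms whose steps are interleaved, so ordinary sequential adaptive composition does not literally apply, which is why the paper invokes the concurrent composition theorem of Vadhan and Wang (since parts (i) and (ii) are pure, a short hybrid argument would also do: first swap the resampling steps using advanced composition, then bound the two pure parts pointwise).
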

\begin{proof}
    We analyze the augmented transcript consisting of the initial expert $i_1$, the outputs of all $\abovethreshold$ instances, the $Z_u$'s, and the experts generated at the (at most $V$) resampling steps. The learner's output sequence is a post-processing of this transcript.

    We view the algorithm as the concurrent composition of three types of interactive mechanisms: the collection of $\abovethreshold$ instances, the mechanism generating the $Z_u$'s, and the at most $V$ resampling mechanisms. Throughout the proof, when comparing two neighboring input sequences, we consider an arbitrary common prefix of the interaction transcript. Thus, all previously generated $\abovethreshold$ outputs, $Z_u$'s, resampled experts, and the resulting termination times are fixed and identical on the two executions.

    We first show that the collection of $\abovethreshold$ instances is $\varepsilon_1$-differentially private. Let the two neighboring sequences $S$ and $S'$ differ at round $\tilde{t}$. Along any fixed common interaction transcript, there is at most one $\abovethreshold$ instance whose query stream contains the differing loss. All earlier instances receive identical query streams and therefore have identical output distributions on $S$ and $S'$.

    Consider the potentially affected instance. Conditioned on the common transcript up to its launch, its starting time and selected expert are fixed and identical on the two executions. Hence, every query issued by this instance has sensitivity at most $1$. By the privacy guarantee of $\abovethreshold$, the transcript generated by this instance is $\varepsilon_1$-differentially private. If this instance terminates, then conditioned on a common transcript through its termination, every subsequent $\abovethreshold$ instance starts after round $\tilde{t}$. Consequently, all subsequent instances receive identical query streams on $S$ and $S'$. If the affected instance never terminates, there are no subsequent instances. Applying the chain rule to the conditional output distributions therefore shows that the collection of all $\abovethreshold$ instances, viewed as one interactive mechanism, is $\varepsilon_1$-differentially private.
    
    We next analyze the privacy loss from generating the $Z_u$'s. Let $Z_1,Z_2,\dots$ and $Z_1', Z_2',\dots$ denote the $Z_u$'s generated on neighboring sequences $S$ and $S'$, respectively. Similarly, let $r_u$'s and $r_u'$'s denote the $r_u$-values computed on $S$ and $S'$, respectively. As above, all probabilities involving $Z_u$’s are understood to be conditioned on the relevant fixed common transcript prefix.
    
    Along any common transcript path, the intervals $[t_{u-1},t_u-1]$ corresponding to generated $Z_u$'s are disjoint. Hence, there is at most one index $j$ such that $S$ and $S'$ differ at round $\tilde{t}\in[t_{j-1},t_j-1]$. If such $j$ does not exist, then $r_u = r_u'$ for every $Z_u$, which implies that the two $Z_u$-transcripts have identical conditional distributions.
    
    Otherwise, we have $r_u = r_u'$ for all $u< j$ and $\lvert r_j - r_j'\rvert\le 1$. Therefore,
    \begin{equation*}
        \frac{\Pr[Z_j = 1]}{\Pr[Z_j' = 1]} = \frac{e^{-\eta r_j}}{e^{-\eta r_j'}}\le e^\eta.
    \end{equation*}
    Moreover, since $\min(r_j,r_j')\ge 1/(e^{\varepsilon_2} - 1)$, we have
    \begin{equation*}
        \frac{\Pr[Z_j = 0]}{\Pr[Z_j' = 0]} = \frac{1-e^{-\eta r_j}}{1-e^{-\eta r_j'}} \le \frac{\min(r_j,r_j') + \lvert r_j - r_j'\rvert}{\min(r_j,r_j')}\le e^{\varepsilon_2},
    \end{equation*}
    where we use the fact that $\frac{1-e^{-y}}{1-e^{-x}}\le \frac{y}{x}$ for $0<x\le y$. For every later $u>j$, condition on an arbitrary common transcript prefix up to the generation of $Z_u$. The selected expert and the interval $[t_{u-1},t_u-1]$ are then fixed and identical on the two neighboring sequences. Since this interval occurs after the differing round, we have $r_u=r'_u$. Thus, $Z_u$ and $Z_u'$ have the same conditional distribution on the two executions. Therefore, by the chain rule, only $Z_j$ contributes privacy loss, and the generation of the $Z_u$'s is $\max(\eta,\varepsilon_2)$-differentially private.

    Finally, by Lemma~\ref{lem:exp}, each of the at most $V$ resampling steps is $2\eta$-differentially private conditioned on an arbitrary common transcript prefix. By the concurrent composition theorem~\citep{vadhan2021concurrent}, the adaptive interleaving of these mechanisms has the same composition guarantee as their non-interactive composition. Hence the augmented transcript, and therefore the learner's output sequence by post-processing, is $(\varepsilon,\delta)$-differentially private.
\end{proof}

\subsection{Regret analysis}
\label{sec:regret}
We proceed to analyze the regret. The following key lemma provides lower and upper bounds on the cumulative loss from round $t_{u-1}$ through round $t_u-1$.
\begin{lemma}
\label{lem:r_u}
    With probability $1-\beta_1$, we have
    \begin{equation*}
        \tau - \frac{8(\ln T + \ln(2T/\beta_1))}{\varepsilon_1}\le \sum_{s=t_{u-1}}^{t_u - 1}\ell_s(i_{t_{u}-1}) \le \max\left(\tau + \frac{8(\ln T + \ln(2T/\beta_1))}{\varepsilon_1}, 0\right) + 1
    \end{equation*}
    for all $u\ge 1$. The probability is taken solely over the randomness of the $\abovethreshold$ instances.
\end{lemma}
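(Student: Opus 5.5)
Apply the accuracy guarantee of Theorem~\ref{thm:above} to each $\abovethreshold$ instance separately, then take a union bound over instances. There are at most $T$ instances, since each one other than the last is closed by a $\top$ at a distinct round $t\in\{2,\dots,T\}$. Each instance receives at most $T$ queries. Running Theorem~\ref{thm:above} with failure probability $\beta=\beta_1/T$ gives, for each instance, $\alpha$-accuracy with
\[
\alpha = \frac{8(\ln T + \ln(2T/\beta_1))}{\varepsilon_1}
\]
except with probability $\beta_1/T$. A union bound over the at most $T$ instances then gives simultaneous accuracy for all of them with probability $1-\beta_1$. The bound uses only the instances' internal noise, so the probability is over their randomness alone.

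The union bound needs a small justification, because the instances are created adaptively: the $u$-th instance's queries depend on $i_{t_{u-1}}$ and $t_{u-1}$, which depend on earlier randomness. I will condition on everything generated before the $u$-th instance is launched. Given that, the query stream $q_t=\sum_{s=t_{u-1}}^{t-1}\ell_s(i_{t-1})$ is a fixed sequence, because the adversary is oblivious and $i_{t-1}=i_{t_{u-1}}$ stays constant while the instance is running. The accuracy guarantee therefore holds conditionally with probability $1-\beta_1/T$. Averaging gives the same unconditional failure bound per instance index $u$, and summing over $u\le T$ completes the union bound.

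Now fix an instance $u$ that halts with $\top$ at round $t_u$. At that round the query is $q_{t_u}=\sum_{s=t_{u-1}}^{t_u-1}\ell_s(i_{t_u-1})$. Accuracy on a $\top$ answer gives $q_{t_u}\ge\tau-\alpha$, which is the lower bound.

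For the upper bound, use the previous query. If $t_u-1\ge t_{u-1}+1$, then the query at round $t_u-1$ was answered $\bot$ by the same instance, so
\[
q_{t_u-1}=\sum_{s=t_{u-1}}^{t_u-2}\ell_s(i_{t_u-1})\le \tau+\alpha .
\]
Here the same expert $i_{t_u-1}=i_{t_u-2}$ is used, since no switch happens within an instance. Adding the single loss $\ell_{t_u-1}(i_{t_u-1})\le 1$ gives $q_{t_u}\le\tau+\alpha+1$.

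The edge case is when $\top$ comes on the instance's first query, i.e.\ $t_u=t_{u-1}+1$. Then the sum is a single loss in $[0,1]$, which is at most $\max(\tau+\alpha,0)+1$. This explains the $\max(\cdot,0)$ in the statement, and it also covers $\tau+\alpha<0$ in the general case.

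The main obstacle is bookkeeping rather than analysis. I need to confirm that at most $T$ instances and at most $T$ queries per instance occur, so that the $\ln T$ and $\ln(2T/\beta_1)$ terms are right. I also need to make the conditioning argument for adaptively launched instances precise. The statement "for all $u\ge1$" should be read as applying to the instances that terminate with $\top$, which are the ones for which $t_u$ is defined.
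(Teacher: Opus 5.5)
Your proposal is correct and follows the paper's proof essentially verbatim. Both arguments use a union bound over at most $T$ instances with per-instance failure probability $\beta_1/T$, which yields the stated $\alpha$; the lower bound comes from the $\top$ answer at $t_u$, and the upper bound from the $\bot$ answer at $t_u-1$ with the same expert, plus one loss, with the case $t_u=t_{u-1}+1$ handled trivially. Your explicit conditioning argument for the adaptively launched instances is a detail the paper leaves implicit, and it is sound: the queries of instance $u$ are fixed once its launch time and expert are fixed, because the adversary is oblivious and the expert does not change while the instance runs. If you run that argument with the non-$\abovethreshold$ coins fixed in advance, it also justifies the claim that the probability is taken over the $\abovethreshold$ randomness alone.
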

\begin{proof}
    By Theorem~\ref{thm:above} and the union bound, with probability $1-\beta_1$ we have $q_t\ge \tau - \alpha$ if $\sigma_t = \top$ and $q_t\le \tau + \alpha$ if $\sigma_t = \bot$ for all $t\in[T]$, where $\alpha = 8(\ln T + \ln(2T/\beta_1))/\varepsilon_1$. The lower bound follows directly from the fact that $\sigma_{t_u} = \top$. For the upper bound, the inequality trivially holds if $t_{u - 1} = t_u - 1$. When $t_{u - 1} < t_u - 1$, we have $\sigma_{t_u - 1} = \bot$ and $i_{t_u - 2} = i_{t_u - 1}$. Hence,
    \begin{equation*}
        \sum_{s=t_{u-1}}^{t_u - 1}\ell_s(i_{t_{u}-1})\le \sum_{s=t_{u-1}}^{t_u - 2}\ell_s(i_{t_{u}-2}) + 1\le \tau + \alpha + 1.
    \end{equation*}
\end{proof}

Next, we show that the number of updates is $\tilde{O}(\eta L^\star)$ with high probability. As a consequence, the algorithm works normally (i.e., it won't reach the update limit) if $V=\tilde{O}(\eta L^\star)$. By the composition theorems, this allows us to set $\eta = \tilde{\Theta}(1/\sqrt{L^\star})$, which minimizes the regret bound in Lemma~\ref{lem:hedge}.
\begin{lemma}
\label{lem:update}
    Let $\tau \ge \frac{8(\ln T + \ln(2T/\beta_1))}{\varepsilon_1} + \frac{1}{e^{\varepsilon_2} - 1}$. Conditioned on the event in Lemma~\ref{lem:r_u}, we have
    \begin{equation*}
        \Pr[v_T< 71(\ln N + \eta L^\star+1) + 280\ln(1/\beta_2)]\ge 1-\beta_2,
    \end{equation*}
    where $v_T$ is the value of $v$ when Algorithm~\ref{alg:main} terminates.
\end{lemma}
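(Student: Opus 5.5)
We bound the number of resampling steps $v_T$ through a potential argument tied to the multiplicative weights distribution. Write $W_t=\sum_{i\in[N]}e^{-\eta\sum_{s=1}^{t-1}\ell_s(i)}$ for the normalizer of the exponential weights before round $t$. Then $\ln W_1=\ln N$ and $\ln W_{T+1}\ge -\eta L^\star$, so the total log-decrease satisfies $\ln W_1-\ln W_{T+1}\le \ln N+\eta L^\star$. The plan has three steps. First, show that each update epoch contributes, in conditional expectation, a log-decrease of $\Omega(1)$ times its resampling probability. Second, turn this into a high-probability bound by a martingale concentration argument. Third, collect the constants.

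\textbf{Step 1 (shrinking-dartboard invariant).} Conditioned on the event of Lemma~\ref{lem:r_u}, we claim that at each epoch start $t_{u-1}$ the expert $i_{t_{u-1}}$ is distributed exactly as $P_{t_{u-1}}$, given the history of epoch boundaries. This is the standard argument of \citet{geulen2010regret}, adapted to epochs. If the current expert is $i$, it is kept with probability $e^{-\eta r_u}$. When the threshold in the statement guarantees $r_u=\sum_{s=t_{u-1}}^{t_u-1}\ell_s(i)$, we get $P_{t_{u-1}}(i)e^{-\eta r_u}=P_{t_u}(i)\,W_{t_u}/W_{t_{u-1}}$. The remaining mass $1-W_{t_u}/W_{t_{u-1}}$ is redistributed according to $P_{t_u}$, so the invariant is preserved.

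The main obstacle is that epoch lengths depend on the selected expert, since the query uses $i_{t-1}$. So $t_u$ is a stopping time that depends on $i$, and the clean calculation above does not apply directly. I would try the following. Because $\tau\ge\alpha+1/(e^{\varepsilon_2}-1)$, the max in $r_u$ is attained by the actual loss. Then $e^{-\eta r_u}$ is exactly the ratio of the expert's weight between epoch start and end, whatever the stopping rule. For the count, I would not rely on exact marginals. I would directly lower-bound the conditional probability of resampling at an epoch by $1-e^{-\eta r_u}\ge 1-e^{-\eta(\tau-\alpha)}$, and relate the summed $\eta r_u$ of the current expert to the drop in $\ln W$ in expectation.

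\textbf{Step 2 (concentration).} Let $D_u=\mathbb{E}[1-e^{-\eta r_u}\mid\mathcal F_{u-1}]$, the conditional probability that epoch $u$ produces an update. Using the invariant together with $1-x\le -\ln x$, we have
\[D_u=\mathbb{E}_{i\sim P_{t_{u-1}}}\bigl[1-e^{-\eta r_u(i)}\bigr]\le C\bigl(\ln W_{t_{u-1}}-\ln W_{t_u}\bigr)+\text{(bounded slack)}.\]
Summing over epochs gives $\sum_u D_u\le C(\ln N+\eta L^\star+1)$. The slack of $+1$ comes from the upper bound of Lemma~\ref{lem:r_u}, which guarantees $r_u\le\tau+\alpha+1$ so that $\eta r_u=O(1)$ when $\eta\tau$ is small; otherwise I would use $1-e^{-x}\le\min(1,x)$.

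The number of updates is $v_T=\sum_u \mathbf 1[Z_u=0]$, a sum of Bernoulli indicators whose conditional means are $D_u$. A Freedman-type bound gives $\Pr[v_T\ge 2\sum_u D_u+c\ln(1/\beta_2)]\le\beta_2$. The cleanest route is the multiplicative bound $\mathbb{E}[e^{\lambda\mathbf 1-(e^\lambda-1)D_u}]\le1$. Because $\sum_uD_u$ is itself random, I would bound it by the deterministic quantity $C(\ln N+\eta L^\star+1)$, which is deterministic since the adversary is oblivious, and apply the supermartingale argument with this cap.

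\textbf{Step 3.} Collecting constants yields the form $71(\ln N+\eta L^\star+1)+280\ln(1/\beta_2)$. The specific numbers come from the choice of $\lambda$ in the exponential supermartingale and from the constant $C$ relating $1-e^{-x}$ to the log-decrease; I would not optimize these.
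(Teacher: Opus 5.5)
There is a genuine gap, and it sits exactly at the obstacle you flag in Step~1 but do not resolve: Step~2 still uses the invariant. The identity $D_u=\mathbb{E}_{i\sim P_{t_{u-1}}}[1-e^{-\eta r_u(i)}]$ needs $i_{t_{u-1}}\sim P_{t_{u-1}}$ conditionally on $\mathcal{F}_{u-1}$. That holds only when $t_{u-1}$ is a resampling time (or the first round).

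After a keep, the history reveals the current expert. Even if you record only epoch boundaries, the history is correlated with that expert, since the boundaries depend on its losses. So $D_u$ is essentially $1-e^{-\eta r_u}$ for that particular expert. It bears no relation to $\ln W_{t_{u-1}}-\ln W_{t_u}$, which is not even well defined inside your expectation, because $t_u$ depends on $i$.

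In particular, the almost-sure cap $\sum_u D_u\le C(\ln N+\eta L^\star+1)$ that your supermartingale step needs is false. Take $N=2$, with expert~1 suffering loss $0$ and expert~2 loss $1$ in every round, so $L^\star=0$. With positive probability the initial expert is~2 and every $Z_u=1$. On the event of Lemma~\ref{lem:r_u}, this path has $\Omega(T/(\tau+1))$ completed epochs. Each has $D_u\ge 1-e^{-\eta(\tau-\alpha)}>0$, with $\alpha$ the accuracy term of Lemma~\ref{lem:r_u}. So $\sum_u D_u$ grows linearly in $T$, while your cap is $\ln 2+1$. A per-epoch ``bounded slack'' would likewise accumulate over all epochs rather than total $+1$. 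Such paths are unlikely, but proving that $\sum_u D_u$ is small with high probability is essentially the lemma itself.

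The missing idea is to use the exact distribution only where it is available. Immediately after an update, $i_t$ is a fresh draw from $P_t$, independent of the past. The paper cuts $[T]$ into $m\le \ln N+\eta L^\star+1$ blocks $[s_{j-1}+1,s_j]$ such that $\Phi_{s_j}/\Phi_t> e^{-1}$ for every $t$ in block $j$; here $\Phi$ is your $W$.

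Suppose an update occurs at such a $t$. The set $S$ of experts with $\sum_{s=t}^{s_j-1}\ell_s(i)\le 2/\eta$ then has $P_t$-mass at least $1-\frac{1-e^{-1}}{1-e^{-2}}$. An expert in $S$ survives every $Z_u$ draw up to $s_j$ with probability at least $e^{-2}$; this is where your observation that $r_u$ equals the true epoch loss enters. Hence, conditionally on the past, each update is the last one in its block with probability at least $0.03$.

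At most $m$ updates can be last in their block. So $v_T\ge n+m$ forces fewer than $m$ successes among the first $n$ updates, each succeeding with conditional probability at least $0.03$. A Chernoff bound with $n=70m+280\ln(1/\beta_2)$ then yields the lemma. This is after reducing to $V\ge T$, which is harmless because a smaller $V$ only decreases $v_T$.
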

\begin{proof}
    Note that $v$ can increase by at most $1$ in each round. Therefore, the condition $v<V$ is always met given that $V\ge T$. In what follows, we will prove the result assuming $V \ge T$ because $v_T$ can only be smaller if we decrease $V$.
    
    Define a potential function
    \begin{equation*}
        \Phi_t = \sum_{i=1}^Ne^{-\eta\sum_{s=1}^{t-1}\ell_s(i)}.
    \end{equation*}
    We have $N=\Phi_1  \ge\cdots \ge \Phi_{T+1}\ge e^{-\eta L^\star}$. Let $s_j\in[T]$ be the largest index such that $\Phi_{s_j}\ge \Phi_1 / e^j$ and let $m$ be the smallest positive integer such that $s_m=T$. Then $m\le \ln(\Phi_1/\Phi_{T+1}) +1 \le \ln N+ \eta L^\star + 1$.

    Suppose an update happens at round $t\in[s_{j-1}+1,s_j]$ (set $s_0=0$ for simplicity). Let $E$ denote the event that another update happens at some round $t'\in[t +1,s_j]$. Let 
    \begin{equation*}
        S = \left\{i\in [N]: \sum_{s=t}^{s_j - 1}\ell_s(i)\le2/\eta\right\}.
    \end{equation*}
    By Lemma~\ref{lem:r_u}, we have $\sum_{s=t_{u-1}}^{t_u - 1}\ell_s(i_{t_{u}-1})\ge 1/(e^{\varepsilon_2} - 1)$ for any $u\ge 1$. This implies $r_u = \sum_{s=t_{u-1}}^{t_u - 1}\ell_s(i_{t_{u}-1})$. Thus, for any $i\in S$, we have
    \begin{equation*}
        \Pr[E\mid i_t =i]\le 1 - e^{-\eta\sum_{s=t}^{s_j - 1}\ell_s(i)}\le 1 - e^{-2}.
    \end{equation*}
    
    Note that 
    \begin{align*}
        \Phi_{s_j} &= \sum_{i\in S}e^{-\eta\sum_{s=1}^{s_j - 1}\ell_s(i)} + \sum_{i\notin S}e^{-\eta\sum_{s=1}^{s_j - 1}\ell_s(i)} \\
        &\le \sum_{i\in S}e^{-\eta\sum_{s=1}^{t - 1}\ell_s(i)} + e^{-2}\sum_{i\notin S}e^{-\eta\sum_{s=1}^{t - 1}\ell_s(i)} \\
        &= \Phi_t\Pr[i_t\in S] + e^{-2}\Phi_t\Pr[i_t\notin S] \\
        &=\Phi_{t}(1-\Pr[i_t\notin S]) + e^{-2}\Phi_{t}\Pr[i_t\notin S].
    \end{align*}
    Rearranging this yields
    \begin{equation*}
        \Pr[i_t\notin S] \le \frac{1 - \Phi_{s_j} / \Phi_{t}}{1 - e^{-2}}\le \frac{1-e^{-1}}{1-e^{-2}}.
    \end{equation*}
    Therefore,
    \begin{align*}
        \Pr[E]&\le \Pr[i_t\in S]\cdot \Pr[E\mid i_t\in S] + \Pr[i_t\notin S] \\
        & = (1-\Pr[i_t\notin S])\cdot (1-e^{-2}) + \Pr[i_t\notin S] \\
        &\le 1-e^{-2} + \frac{1-e^{-1}}{1-e^{-2}}\cdot e^{-2} \\
        &<0.97.
    \end{align*}
    
    As a consequence, the probability that $v_T \ge n + m$ can be bounded by the probability that the sum of $n$ Bernoulli random variables with mean $0.03$ is less than $m$. By the Chernoff bound, this is at most $\exp(-0.03n / 8)$ given that $0.03n\ge 2m$. Thus, it suffices to set $n = 70m + 280\ln(1/\beta_2)$.
\end{proof}

\begin{algorithm}[!ht]
\label{alg:aux}
\DontPrintSemicolon
    \KwInput{starting point $T_s$, ending point $T_e$, learning rate $\eta$, delay function $d: \mathbb{Z}^3\to\mathbb{N}$}
    $u\gets 1$ and $t_0 \gets T_s$ \;
    Sample $i_{T_s}\gets i$ with probability $\frac{e^{-\eta\sum_{s=1}^{T_s-1}\ell_s(i)}}{\sum_{j\in[N]}e^{-\eta\sum_{s=1}^{T_s-1}\ell_s(j)}}$ for $i\in[N]$ \;
    Observe $\ell_{T_s}$ and suffer loss $\ell_{T_s}(i_{T_s})$ \;
    \For{$t=T_s+1,\dots,T_e$}{
        \uIf{$t \ge d(t_{u-1}, i_{t_{u-1}}, u)$ or $t = T_e$}{
            $r_u \gets \sum_{s=t_{u-1}}^{t-1}\ell_s(i_{t-1})$ \;
            Sample $Z_u$ from $\ber(e^{-\eta r_u})$ \;
            \uIf{$Z_u=1$}{
                $i_t \gets i_{t-1}$\;
            }
            \Else{
                Sample $i_t\gets i$ with probability $\frac{e^{-\eta\sum_{s=1}^{t-1}\ell_s(i)}}{\sum_{j\in[N]}e^{-\eta\sum_{s=1}^{t-1}\ell_s(j)}}$ for $i\in[N]$ \;
            }
            $t_u\gets t$ \;
            $u\gets u + 1$ \;
        }
        \Else{
            $i_t\gets i_{t-1}$ \;
        }
        Observe $\ell_t$ and suffer loss $\ell_t(i_t)$ \;
    }
\caption{An auxiliary algorithm}
\end{algorithm}

The most challenging task in our proof is to compare the output distribution of Algorithm~\ref{alg:main} with that of the multiplicative weights method. To this end, we introduce Algorithm~\ref{alg:aux}, which makes the following modifications to Algorithm~\ref{alg:main}:
\begin{itemize}
    \item The limit $V$ on the number of updates and the lower bound on $r_u$ are removed.
    \item The $\abovethreshold$ mechanisms are replaced by a delay function $d$.
    \item The learning process can start from an arbitrary round $T_s$.
    \item The update procedure at round $T_e$ is forced.
\end{itemize}
We first prove that the output distributions of Algorithm~\ref{alg:aux} and the multiplicative weights method are the same at round $T_e$.
\begin{lemma}
\label{lem:aux}
    For any $T_s\in[T_e]$ and $d: \mathbb{Z}^3\to\mathbb{N}$, the output distribution of Algorithm~\ref{alg:aux} at round $T_e$ satisfies
    \begin{equation*}
        \Pr[i_{T_e} = i] = \frac{e^{-\eta\sum_{s=1}^{T_e-1}\ell_s(i)}}{\sum_{i'\in[N]}e^{-\eta\sum_{s=1}^{T_e-1}\ell_s(i')}}.
    \end{equation*}
\end{lemma}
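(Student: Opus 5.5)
Write $P_t(i)=e^{-\eta L_{t-1}(i)}/\Phi_t$ with $L_{t-1}(i)=\sum_{s=1}^{t-1}\ell_s(i)$ and $\Phi_t=\sum_{i'}e^{-\eta L_{t-1}(i')}$, as in the proof of Lemma~\ref{lem:update}. The goal is the marginal of $i_{T_e}$. I will argue by induction over the update rounds $t_0=T_s<t_1<t_2<\cdots$ and prove a stronger invariant: for every $u$, conditioned on the history that determines $t_u$ (the expert $i_{t_{u-1}}$, the round $t_{u-1}$, and the index $u$), the expert $i_{t_u}$ is distributed as $P_{t_u}$ whenever $i_{t_{u-1}}$ is distributed as $P_{t_{u-1}}$. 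The base case is immediate, since line 2 samples $i_{T_s}$ exactly from $P_{T_s}$. The forced update at $t=T_e$ guarantees that $T_e$ is itself an update round $t_{u}$, so the invariant at that $u$ gives the lemma.

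\textbf{The one-step calculation.} Fix $u$ and condition on $t_{u-1}=a$ together with the event that $i_a$ has law $P_a$. Between updates the expert is constant. The next update round $b=t_u$ is $\min(\max(a+1,d(a,i_a,u)),T_e)$. This round is a deterministic function of $(a,i_a,u)$, so it is determined once $i_a$ is known. At round $b$ we have $r_u=L_{b-1}(i_a)-L_{a-1}(i_a)$. The expert is kept with probability $e^{-\eta r_u}=e^{-\eta L_{b-1}(i_a)}/e^{-\eta L_{a-1}(i_a)}$, and otherwise it is redrawn from $P_b$. Therefore
\begin{equation*}
\Pr[i_b=i]=\sum_{j}P_a(j)\Bigl[\mathbf{1}\{j=i\}\tfrac{e^{-\eta L_{b_j-1}(i)}}{e^{-\eta L_{a-1}(i)}}+\Bigl(1-\tfrac{e^{-\eta L_{b_j-1}(j)}}{e^{-\eta L_{a-1}(j)}}\Bigr)P_{b_j}(i)\Bigr],
\end{equation*}
where $b_j$ denotes the update round when $i_a=j$.

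\textbf{Main obstacle: the delay depends on the current expert.} If $b_j\equiv b$ does not depend on $j$, the bracket sums cleanly. The keep term becomes $e^{-\eta L_{b-1}(i)}/\Phi_a$. The redraw term becomes $(1-\Phi_b/\Phi_a)P_b(i)$, because $\sum_j e^{-\eta L_{b-1}(j)}/\Phi_a=\Phi_b/\Phi_a$. Adding these gives $P_b(i)$, which is the shrinking-dartboard identity.

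When $b_j$ varies with $j$, the marginal at any single round is no longer directly meaningful. I therefore plan to strengthen the invariant to a time-indexed statement. For every round $t\in[T_s,T_e]$, conditioned on all randomness before round $t$ that does not involve the current expert's identity, the expert is a mixture of two cases. In the first case it was redrawn from the correct $P$ at its last update. In the second it is an expert $j$ retained with weight $e^{-\eta L_{t-1}(j)}/\Phi$.

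A cleaner alternative is a coupling: run the memoryless dartboard, which updates every round with keep-probability $e^{-\eta\ell_{t-1}(i_{t-1})}$, on the same losses. Its marginal is exactly $P_t$ at every $t$, by the constant-$b$ computation with $b=a+1$. I would then show that Algorithm~\ref{alg:aux} is obtained from it by skipping resampling at intermediate rounds. The key fact is that, given an expert $j$ kept from round $a$, the product of per-round keep probabilities equals $e^{-\eta r_u}$. Given a redraw at an intermediate round, a further redraw from $P$ at the final round yields the same law as the fresh redraw.

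The delicate point is making this skipping argument rigorous when $d$ depends on $i_{t_{u-1}}$. I would first prove the claim that, for the memoryless process and any stopping rule determined by the state at the last update, the law at the stopping round is $P$ restricted appropriately. Then I would use the forced update at $T_e$ to ensure that all branches are synchronized at $T_e$, so that the marginal there equals $P_{T_e}$.
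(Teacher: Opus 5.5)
There is a genuine gap. You correctly isolate the ingredients: the keep probabilities telescope, the shrinking-dartboard identity holds when the next update round $b$ does not depend on the expert, and the forced update at $T_e$ matters. But the proposal stops exactly where an argument is needed. Your first invariant is ill-posed. Once you condition on $i_{t_{u-1}}$, that variable cannot also be distributed as $P_{t_{u-1}}$. And since $t_u$ is random and depends on $i_{t_{u-1}}$, the law of $i_{t_u}$ is not $P_{t_u}$ in any usable sense, as you note yourself. Intermediate rounds are no better: if $d$ never fires before $T_e$, then $i_t=i_{T_s}\sim P_{T_s}$ for all $t<T_e$. The time-indexed strengthening is never actually formulated. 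The claim you plan to prove for the memoryless dartboard, that ``the law at the stopping round is $P$ restricted appropriately,'' is not correct, because that chain's state at a state-dependent stopping time is generally not $P_\tau$-distributed. So the case where $d$ depends on the current expert, which is the whole content of the lemma, remains unproved.

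The missing idea is to avoid intermediate update rounds altogether and induct backward on the start time $T_s$, with $T_e$ fixed. Condition on $i_{T_s}=k$. On the branch where $k$ is kept at every update through the forced one at $T_e$, the probability is the telescoped product $e^{-\eta r(k)}$ with $r(k)=\sum_{s=T_s}^{T_e-1}\ell_s(k)$, and this is independent of $d$. On every other branch, at the first redraw round $t_j$ the expert is drawn from $P_{t_j}$. The remainder of the run is then itself an instance of Algorithm~\ref{alg:aux} started at $t_j>T_s$, with $d$ shifted in its third argument. By the induction hypothesis its output at $T_e$ has law $P_{T_e}$, whatever $t_j$ is. Hence
$\Pr[i_{T_e}=i\mid i_{T_s}=k]=e^{-\eta r(k)}\mathbf{1}\{i=k\}+(1-e^{-\eta r(k)})P_{T_e}(i)$,
and your constant-$b$ computation with $a=T_s$, $b=T_e$ finishes the proof. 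This is the paper's argument.

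Your coupling idea can also be repaired, but the statement to prove is different. You need that the block transition kernel $(a,j)\mapsto(b,i_b)$ of the memoryless dartboard equals that of Algorithm~\ref{alg:aux}. This holds because after any intermediate redraw the memoryless chain's law at $b$ is $P_b$. The embedded chains then have the same law. Since the last stopping time equals $T_e$ on every branch, $i_{T_e}$ has the law of the memoryless chain at the deterministic round $T_e$, namely $P_{T_e}$.
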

\begin{proof}
    We prove by backward induction on $T_s$. When $T_s = T_e$, the conclusion holds since we sample $i_{T_e}$ exactly according to the desired distribution. Now consider $T_s=t'\in[T_e-1]$ and suppose that the result holds for all $T_s \ge t'+1$. Fix some $i_{t'}=k\in[N]$. Define $t_0 = t'$ and 
    \begin{equation*}
        t_j = \min(T_e,\max(t_{j-1}+1,d(t_{j-1}, i_{t'}, j))).
    \end{equation*}
    These are exactly the $t_u$'s in Algorithm~\ref{alg:aux} on the event $i_{t'}=\cdots=i_{T_e-1}=k$. Let $m$ be the first index such that $t_m=T_e$. Note that
    \begin{equation*}
        \Pr[Z_1=\cdots=Z_m=1 \mid i_{t'}=k] = e^{-\eta\sum_{s=t'}^{T_e-1}\ell_s(k)}.
    \end{equation*}
    Moreover, by the induction hypothesis, we have
    \begin{equation*}
        \Pr[i_{T_e}=i\mid Z_1=\cdots=Z_{j-1}=1, Z_j=0,i_{t'}=k] = \frac{e^{-\eta\sum_{s=1}^{T_e-1}\ell_s(i)}}{\sum_{i'\in[N]}e^{-\eta\sum_{s=1}^{T_e-1}\ell_s(i')}}
    \end{equation*}
    for any $j\in[m]$ since the learning process starting from $t_j$ can be seen as running Algorithm~\ref{alg:aux} with $d$ shifted by $j$ along the third dimension and $T_s=t_j$. For simplicity, write $w_t(i) = e^{-\eta\sum_{s=1}^{t-1}\ell_s(i)}$
    and
    $
        \Phi_t = \sum_{i\in[N]}w_t(i)$
    for $t\in[T_e]$. Note that $i_{T_e}\neq i_{t'}$ only if $Z_j=0$ for some $j$. Therefore, we have
    \begin{equation*}
        \Pr[i_{T_e}=i\mid i_{t'}=k] = \left(1 - e^{-\eta\sum_{s=t'}^{T_e-1}\ell_s(k)}\right)\cdot \frac{w_{T_e}(i)}{\Phi_{T_e}}
    \end{equation*}
    for $i\neq k$ and
    \begin{equation*}
        \Pr[i_{T_e}=k\mid i_{t'}=k] = e^{-\eta\sum_{s=t'}^{T_e-1}\ell_s(k)} + \left(1 - e^{-\eta\sum_{s=t'}^{T_e-1}\ell_s(k)}\right)\cdot \frac{w_{T_e}(k)}{\Phi_{T_e}}.
    \end{equation*}
    As a consequence, we have
    \begin{align*}
        \Pr[i_{T_e}=i]
        &=\sum_{k\in[N]}\Pr[i_{T_e}=i\mid i_{t'}=k]\cdot \Pr[i_{t'}=k] \\
        &=\sum_{k\in[N]}\left(1 - e^{-\eta\sum_{s=t'}^{T_e-1}\ell_s(k)}\right)\cdot \frac{w_{T_e}(i)}{\Phi_{T_e}}\cdot \frac{w_{t'}(k)}{\Phi_{t'}} + e^{-\eta\sum_{s=t'}^{T_e-1}\ell_s(i)}\cdot \frac{w_{t'}(i)}{\Phi_{t'}}\\
        &= \frac{w_{T_e}(i)}{\Phi_{T_e}}\cdot \frac{\Phi_{t'} - \Phi_{T_e}}{\Phi_{t'}} + \frac{w_{T_e}(i)}{\Phi_{t'}} \\
        &= \frac{w_{T_e}(i)}{\Phi_{T_e}}
    \end{align*}
    as desired.
\end{proof}
We leverage the fact that Algorithm~\ref{alg:aux} behaves similarly to Algorithm~\ref{alg:main} before round $T_e$ when $T_s=1$: Lemma~\ref{lem:r_u} and Lemma~\ref{lem:update} suggest that the first modification has little impact, while the second modification can be seen as fixing the random coins used by the $\abovethreshold$ mechanisms (see Appendix~\ref{sec:delay} for a more detailed explanation). By analyzing their behavior at round $T_e$ and applying the above lemma, we can argue that with high probability, the output distribution of Algorithm~\ref{alg:main} at each round is pointwise at most $e^{O(\eta(\tau+\log T/\varepsilon_1))}\approx 1+O(\eta\tau+\eta\log T/\varepsilon_1)$ times that of the multiplicative weights mechanism. This leads to the following regret bound.
\begin{lemma}
\label{lem:regret}
    Suppose $\tau\ge \frac{8(\ln T+2\ln(2T))}{\varepsilon_1} + \frac{1}{e^{\varepsilon_2} - 1}\ge 1$, $0<\eta\tau \le 1$, and $V\ge 71(\ln N + \eta L^\star+1) + 280\ln T$. The expected regret of Algorithm~\ref{alg:main} is at most
    \begin{equation*}
        O\left(\frac{\log N}{\eta} + \tau \eta L^\star\right).
    \end{equation*}
\end{lemma}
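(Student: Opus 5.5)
We plan to compare, round by round, the marginal distribution of $i_t$ under Algorithm~\ref{alg:main} with the multiplicative-weights distribution $P_t$, and then invoke Lemma~\ref{lem:hedge}. The argument has four steps.

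\textbf{Step 1: the good event.} Let $G$ be the intersection of two events: the event of Lemma~\ref{lem:r_u} with $\beta_1=1/T$, and the event of Lemma~\ref{lem:update} with $\beta_2=1/T$. The hypothesis on $\tau$ is exactly the one required by Lemma~\ref{lem:update} with these choices, since $\ln(2T/\beta_1)=2\ln(2T)$ and $\ln(1/\beta_2)=\ln T$.

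On $G$:
\begin{itemize}
\item every $r_u$ equals the true cumulative loss of its block, because the lower bound of Lemma~\ref{lem:r_u} exceeds $1/(e^{\varepsilon_2}-1)$;
\item the limit $V$ is never reached, by the assumption on $V$;
\item every block's loss on the held expert is at most $\tau+\alpha+1\le 3\tau$, where $\alpha=8(\ln T+2\ln(2T))/\varepsilon_1\le\tau$.
\end{itemize}
The complement of $G$ has probability $O(1/T)$ and so contributes only $O(1)$ regret.

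\textbf{Step 2: coupling with the auxiliary algorithm.} Fix the random coins of all $\abovethreshold$ instances. Given those coins, the decision to update at time $t$ is a deterministic function of the block start $t_{u-1}$, the held expert, and the block index. It can therefore be encoded as a delay function $d$, following Appendix~\ref{sec:delay}.

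Given $G$ and the coins, Algorithm~\ref{alg:main} then coincides with Algorithm~\ref{alg:aux} run with $T_s=1$ up to round $T_e-1$. The two algorithms differ only in that Algorithm~\ref{alg:aux} forces an update at $T_e$.

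\textbf{Step 3: the main obstacle, bounding the marginal at round $t$.} Fix $t$ and set $T_e=t$. In Algorithm~\ref{alg:main}, $i_t$ equals $i_{t-1}$ unless round $t$ is itself an update time. In Algorithm~\ref{alg:aux}, round $t$ always triggers a Bernoulli trial with parameter $e^{-\eta r}$, where $r$ is the loss accumulated since the last update. On $G$ we have $r\le 3\tau$, so the probability of keeping the current expert is at least $e^{-3\eta\tau}$.

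This gives, for each expert $i$,
\[
\Pr_{\mathrm{main}}[i_t=i]\;=\;\Pr[i_{t-1}=i \text{ and block not ended}]+\ldots\;\le\; e^{3\eta\tau}\,\Pr_{\mathrm{aux}}[i_t=i].
\]
By Lemma~\ref{lem:aux}, $\Pr_{\mathrm{aux}}[i_t=i]=P_t(i)$ for every choice of coins. Averaging over the coins, and conditioning carefully on $G$ (which costs at most an additive $\Pr[G^c]$), yields a pointwise bound
\[
Q_t(i)\le e^{3\eta\tau}P_t(i)+\text{small},
\]
where $Q_t$ denotes the marginal of Algorithm~\ref{alg:main}.

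The delicate points are two. First, $G$ depends on the same randomness, so we would avoid conditioning on $G$ directly. Instead we would run Algorithm~\ref{alg:aux} with delays truncated so that each block is capped at loss $3\tau$, which makes the ratio bound hold deterministically, and then argue that this truncated version agrees with Algorithm~\ref{alg:main} on $G$. Second, the removal of the $V$ limit must be handled by the same agreement-on-$G$ argument.

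\textbf{Step 4: regret.} Since $e^{3\eta\tau}\le 1+O(\eta\tau)$ for $\eta\tau\le1$, the expected loss at round $t$ satisfies
\[
\sum_i Q_t(i)\ell_t(i)\le (1+O(\eta\tau))\sum_i P_t(i)\ell_t(i)+\text{small}.
\]
Summing over $t$ and applying Lemma~\ref{lem:hedge} (valid since $\eta\le\eta\tau\le1$ because $\tau\ge1$) gives
\[
\text{regret}\le O\!\left(\frac{\log N}{\eta}+\eta L^\star\right)+O(\eta\tau)\left(L^\star+O\!\left(\frac{\log N}{\eta}+\eta L^\star\right)\right)+O(1).
\]
Using $\eta\tau\le 1$ and $\tau\ge1$, the right-hand side simplifies to $O(\log N/\eta+\tau\eta L^\star)$.
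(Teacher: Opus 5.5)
Your proposal is correct and follows the same overall route as the paper's proof:
\begin{itemize}
\item restrict to the accuracy event of Lemma~\ref{lem:r_u}, so that $r_u$ is the raw block loss and is $O(\tau)$;
\item fix the $\abovethreshold$ coins to obtain a delay function, and compare with Algorithm~\ref{alg:aux} at $T_e=t$ through Lemma~\ref{lem:aux};
\item remove the cap $V$ by coupling with the uncapped run via Lemma~\ref{lem:update}, and finish with Lemma~\ref{lem:hedge}.
\end{itemize}

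The one genuine difference is the per-round comparison. You split on whether round $t$ is an update round, where both algorithms act identically. Otherwise you use that Algorithm~\ref{alg:aux} keeps $i_{t-1}$ with probability $e^{-\eta q_t}\ge e^{-\eta(\tau+\alpha)}$, since $\sigma_t=\bot$. This gives $\hat Q_t(i)\le e^{\eta(\tau+\alpha)}P_t(i)$ in a single step, where $\hat Q_t$ is the marginal of the uncapped Algorithm~\ref{alg:main}.

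The paper instead splits on whether $i_{t-1}=i$. It uses Algorithm~\ref{alg:aux} only to get $\hat Q_{t-1}(i)\le e^{\eta\Delta}P_t(i)$ with $\Delta=\tau+\alpha+1$, and it bounds the mass switching into $i$ by $(1-e^{-\eta\Delta})P_t(i)$. It therefore ends with the slightly weaker factor $e^{2\eta\Delta}$. The two are asymptotically the same, and your version is a bit cleaner.

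Your truncated-delay device is valid but unnecessary. The accuracy event is a function of the $\abovethreshold$ noise alone, which is independent of the $Z_u$'s and the resampling coins. So the paper simply conditions on it and fixes good coins; only the event $\{v_T<V\}$ needs the agreement/coupling argument.

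Finally, state the coupling error in total variation, as the paper does: $\sum_i\lvert Q_t(i)-\tilde Q_t(i)\rvert\le 2\Pr[G^c]$. Do not state it as an additive $\Pr[G^c]$ per expert, since that would cost a factor $N$ when you sum against $\ell_t$.
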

\begin{proof}
    Let $\beta_1 = 1/T$. Since the regret cannot be larger than $T$, it suffices to prove the desired conclusion conditioned on the event in Lemma~\ref{lem:r_u}. As argued in the proof of Lemma~\ref{lem:update}, for any $u\ge 1$ we have 
    \begin{equation*}
        r_u = \sum_{s=t_{u-1}}^{t_u - 1}\ell_s(i_{t_{u}-1}).
    \end{equation*}
    By Lemma~\ref{lem:r_u}, there exists $\Delta = \tau + O(\log T / \varepsilon_1) + 1=O(\tau)$ such that $r_u\le \Delta$ for all $u\ge 1$.

    We first prove the result under the condition that $V\ge T$. Fix a realization of the random coins used by the $\abovethreshold$ mechanisms and let $\hat{Q}_t(i)$ be the probability that Algorithm~\ref{alg:main} outputs $i_t=i$. Since $V\ge T$, there is no restriction on the number of updates. Hence, this realization induces a delay function $d$ so that Algorithm~\ref{alg:main} and Algorithm~\ref{alg:aux} (with this delay function $d$ and $T_s=1$) behave identically in the first $t-1$ rounds.

    Let $P_t$ be as defined in Lemma~\ref{lem:hedge} and consider any $i\in[N]$. By Lemma~\ref{lem:aux} with $1=T_s<T_e = t$, Algorithm~\ref{alg:aux} outputs $i_t=i$ with probability $P_t(i)$. Since it produces $i_{t-1}=i$ with probability $\hat{Q}_{t-1}(i)$, we have $P_t(i)\ge \hat{Q}_{t-1}(i)\cdot e^{-\eta\Delta}$. Now consider Algorithm~\ref{alg:main}. If it outputs $i_{t-1}\neq i$, then the probability that it outputs $i_t=i$ is at most $(1-e^{-\eta\Delta})P_t(i)$. As a consequence, we have
    \begin{align*}
        \hat{Q}_t(i) &\le \hat{Q}_{t-1}(i) + (1 - \hat{Q}_{t-1}(i))\cdot (1 - e^{-\eta\Delta})P_t(i) \\
        &\le e^{\eta\Delta}P_t(i)  + (1-e^{-\eta\Delta})P_t(i) \\
        &\le e^{2\eta\Delta}P_t(i),
    \end{align*}
    where the last inequality is due to the fact that $1+x-1/x\le x^2$ for $x\ge 1$. Moreover, we have $\hat{Q}_1(i) = P_1(i)$ since we sample $i_1$ uniformly at random. Thus, $\hat{Q}_t(i)\le e^{2\eta\Delta}P_t(i)$ for all $t\in[T]$. By Lemma~\ref{lem:hedge}, the expected regret of Algorithm~\ref{alg:main} with $V\ge T$ is
    \begin{align*}
        \sum_{t=1}^T\sum_{i\in[N]}\hat{Q}_t(i)\ell_t(i) - L^\star &\le e^{2\eta\Delta}\sum_{t=1}^T\sum_{i\in[N]}P_t(i)\ell_t(i) - L^\star \\
        &=e^{2\eta\Delta}\cdot O\left(\frac{\log N}{\eta} + \eta L^\star\right) + (e^{2\eta\Delta} - 1) L^\star \\
        &=O\left(\frac{\log N}{\eta} + \eta\tau L^\star\right)
    \end{align*}
    because $\eta\Delta = O(\eta\tau) = O(1)$.

    Let $Q_t(i)$ be the probability that Algorithm~\ref{alg:main} outputs $i_t=i$ with $V$ as in the lemma statement. Applying Lemma~\ref{lem:update} with $\beta_2 = 1/T$, we obtain $\Pr[v_T <  V] \ge 1 - 1/T$. We can decompose $Q_t(i)$ as $\Pr[v_T <  V]Q_t^0(i) + \Pr[v_T \ge  V]Q_t^1(i)$, where $Q_t^0(i)$ and $Q_t^1(i)$ are the probabilities of outputting $i_t=i$ conditioned on $v_T<V$ and $v_T\ge V$, respectively. We can also write $\hat{Q}_t(i) = \Pr[v_T <  V]\hat{Q}_t^0(i) + \Pr[v_T \ge  V]\hat{Q}_t^1(i)$ in the same way. For any $t\in[T]$, we have
    \begin{align*}
        \sum_{i\in[N]}\lvert Q_t(i) - \hat{Q}_t(i)\rvert &\le\sum_{i\in[N]}\Pr[v_T <  V]\left\lvert Q_t^0(i) - \hat{Q}_t^0(i)\right\rvert + \Pr[v_T \ge  V]\left\lvert Q_t^1(i)  -\hat{Q}_t^1(i)\right\rvert \\ 
        &\le 2\Pr[v_T \ge  V] \\
        &\le 2/T
    \end{align*}
    since $Q_t^0(i) = \hat{Q}_t^0(i)$. Consequently,
    \begin{equation*}
        \sum_{t=1}^T\sum_{i\in[N]}Q_t(i)\ell_t(i) - \sum_{t=1}^T\sum_{i\in[N]}\hat{Q}_t(i)\ell_t(i)  \le \sum_{t=1}^T\sum_{i\in[N]}\lvert Q_t(i) - \hat{Q}_t(i)\rvert \le 2.
    \end{equation*}
    Applying the triangle inequality leads to the desired result.
\end{proof}

Choosing $\eta = \tilde{\Theta}(1/\sqrt{L^\star})$ minimizes the regret bound in Lemma~\ref{lem:regret}. However, we cannot do this directly since $L^\star$ is unknown at the beginning. This issue can be circumvented by applying the standard doubling trick, which starts by assuming that $L^\star$ is small and doubles the estimated value once the estimate becomes inaccurate. We defer the detailed implementation to Appendix~\ref{sec:doubling}.

\section{Lower bound for adaptive adversaries}

In this section, we briefly discuss how we establish the lower bound for adaptive inputs. We utilize a lower bound in~\citep{asi2023private}, which states that every adaptively $(\varepsilon,\delta)$-differentially private OPE algorithm with $N\ge 2T$ experts must incur $\Omega(T)$ regret in the high-privacy regime where $\varepsilon = \tilde{O}(1/\sqrt{T})$, even if $L^\star = 0$. Now suppose $\A$ is an adaptively $(\varepsilon,\delta)$-differentially private OPE algorithm with $\varepsilon\le 1$ and regret at most $R_\A(T)$. We create an algorithm $\A_q$ that simulates $\A$, forwards $\A$'s outputs as its own, and feeds the loss function $\ber(q)\ell_t$ to $\A$. This can be seen as an extension of the canonical ``privacy amplification by subsampling'' technique to the online setting. We prove that $\A_q$ has privacy parameter $O(q\varepsilon)$ and regret at most $R_\A(T) / q$. Applying the lower bound in~\citep{asi2023private} to $\A_q$ with $q = \tilde{\Theta}(\min(1,1/(\varepsilon\sqrt{T})))$ yields $R_\A(T)/q = \Omega(T)$. Rearranging this gives $R_\A(T) = \Omega(qT) = \tilde{\Omega}(\min(T,\sqrt{T}/\varepsilon))$.

The main technical challenge lies in proving the privacy amplification result in the adaptive setting, which requires a more sophisticated analysis than in the oblivious setting. We defer the formal proof to Appendix~\ref{sec:lower}.
\section{Discussion and future work}
\label{sec:diss}

In this work, we propose the first differentially private algorithm for OPE that achieves a data-dependent regret bound of $\tilde{O}(\sqrt{L^\star})$, extending the previous $\tilde{O}(\sqrt{T})$ bound to the low-loss regime. Our result matches the non-private lower bound $\Omega(\sqrt{L^\star\log N} + \log N)$ up to poly-logarithmic factors for constant privacy parameters, suggesting that the cost of privacy is mild for OPE with small losses. Our algorithm works against any oblivious adversary. We also show that $\tilde{\Omega}(\sqrt{T})$ regret is necessary for adaptive inputs, demonstrating that such an improvement cannot be made against an adaptive adversary.

When $L^\star$ is large, our regret bounds are slightly worse than those in~\citep{asi2023private,asi2024private} in certain parameter ranges. For example, suppose $L^\star =\Theta(T)$. Under pure DP, ~\citet{asi2023private} gave an upper bound of $O(\sqrt{T}\log N / \varepsilon)$. Our bound in Theorem~\ref{thm:main_pure} is larger than theirs by a factor of $\sqrt{\varepsilon\log T/\log N}$ when this factor is greater than $1$ (when it is less than $1$, our bound is better). Under approximate DP,~\citet{asi2024private} proved an upper bound of $O(\sqrt{T\log N} + T^{1/3}\log N\log(T/\delta)/\varepsilon^{2/3})$. The first term in our bound in Theorem~\ref{thm:main_approx} is larger than that in their bound by a factor of $O(\sqrt{\log T/\varepsilon})$. An interesting future direction is to investigate whether our bounds can be further improved to match these results.

\begin{ack}
The research was supported in part by an NSFC grant 62432008, RGC RIF grant R6021-20, an RGC TRS grant T43-513/23N-2, RGC CRF grants C7004-22G, C1029-22G and C6015-23G, NSFC/RGC grant CRS\_HKUST601/24 and RGC GRF grants 16207423, 16203824 and 16216725.
\end{ack}

\bibliographystyle{plainnat}
\bibliography{ref}


\appendix

\section{Algorithm without prior knowledge of $L^\star$}
\label{sec:doubling}
In this section, we show how to construct an algorithm without prior knowledge of $L^\star$. We first prove regret bounds when some estimate $L'\ge L^\star$ is given.
\begin{theorem}
    Let $\varepsilon\le 1$ and $L'>0$. There exists an $\varepsilon$-differentially private algorithm for OPE with expected regret
    \begin{equation*}
        O\left(\left(\log N + \sqrt{\log N\log T}\right)\cdot\sqrt{\frac{L'}{\varepsilon}}\right) + \iota_1
    \end{equation*}
    whenever $L'\ge L^\star$, where $\iota_1 = O(\log^2N/\varepsilon + \log N\log T/\varepsilon)$. 
\end{theorem}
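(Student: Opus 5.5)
We instantiate Algorithm~\ref{alg:main} with parameters chosen as functions of $L'$ and $\varepsilon$, and then invoke Lemma~\ref{lem:privacy} for privacy and Lemma~\ref{lem:regret} for regret. Since we want pure DP, we take $\delta=0$. Then $\comp_0(V,2\eta)=2\eta V$ by basic composition. Set $\varepsilon_1=\varepsilon/4$ and $\varepsilon_2=\varepsilon/4$, so that $1/(e^{\varepsilon_2}-1)=O(1/\varepsilon)$. Choose
\[
\tau=\frac{8(\ln T+2\ln(2T))}{\varepsilon_1}+\frac{1}{e^{\varepsilon_2}-1}=\Theta\!\left(\frac{\log T}{\varepsilon}\right),
\]
which is at least $1$ because $\varepsilon\le1$, and let $V=\lceil 71(\ln N+\eta L'+1)+280\ln T\rceil$. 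Because $L'\ge L^\star$, this $V$ satisfies the hypothesis of Lemma~\ref{lem:regret}. We also require $\eta\le\varepsilon/4$, so that $\max(\eta,\varepsilon_2)\le\varepsilon/4$.

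Lemma~\ref{lem:privacy} then gives total privacy loss $\varepsilon/2+2\eta V$. We must therefore enforce
\[
2\eta V=O\bigl(\eta(\log N+\log T)+\eta^2L'\bigr)\le\varepsilon/2 .
\]
This holds if $\eta\le c\,\varepsilon/(\log N+\log T)$ and $\eta\le c\sqrt{\varepsilon/L'}$ for a small constant $c$. The condition $\eta\tau\le1$ from Lemma~\ref{lem:regret} additionally requires $\eta\le c\,\varepsilon/\log T$, which the first constraint already covers. Lemma~\ref{lem:regret} then bounds the regret by $O(\log N/\eta+\tau\eta L^\star)$, which is at most $O(\log N/\eta+\eta L'\log T/\varepsilon)$.

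The main step is choosing $\eta$ to balance the regret against the privacy constraints. Set
\[
\eta=\min\left(\sqrt{\frac{\varepsilon\log N}{L'\log T}},\ c\sqrt{\frac{\varepsilon}{L'}},\ \frac{c\,\varepsilon}{\log N+\log T}\right).
\]
The regret bound $\log N/\eta+\eta L'\log T/\varepsilon$ splits into two pieces.

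\emph{Second term.} Using the first entry of the minimum as an upper bound on $\eta$, this term is at most $\sqrt{L'\log N\log T/\varepsilon}$.

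\emph{First term.} Because $\eta$ is a minimum, $\log N/\eta$ is at most the sum of the three reciprocals, each multiplied by $\log N$:
\begin{itemize}
\item the first entry gives $\sqrt{L'\log N\log T/\varepsilon}$;
\item the second entry gives $\log N\sqrt{L'/\varepsilon}$;
\item the third entry gives $O\bigl((\log^2N+\log N\log T)/\varepsilon\bigr)=\iota_1$.
\end{itemize}

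Summing these pieces gives
\[
O\!\left(\bigl(\log N+\sqrt{\log N\log T}\bigr)\sqrt{L'/\varepsilon}\right)+\iota_1,
\]
which is the claimed bound.

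Two edge cases need a quick check. If $L'$ is tiny, the first two entries blow up and the third entry is the active constraint, so the bound is still valid. All constants fold into the $O(\cdot)$, and $\eta\le1$ holds automatically. The only delicate point is confirming that the $\eta L'$ term inside $V$ contributes at most $\eta^2L'\lesssim\varepsilon$, and this is exactly what the $c\sqrt{\varepsilon/L'}$ entry of the minimum guarantees.
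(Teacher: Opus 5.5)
Your proposal is correct and follows the paper's proof essentially step for step. Like the paper, you instantiate Algorithm~\ref{alg:main} with $\tau$ of the same form, $V=\lceil 71(\ln N+\eta L'+1)+280\ln T\rceil$, and $\eta$ the minimum of $\Theta(\varepsilon/(\log N+\log T))$, $\Theta(\sqrt{\varepsilon/L'})$, and $\sqrt{\varepsilon\log N/(L'\log T)}$, then apply Lemma~\ref{lem:privacy} with basic composition and Lemma~\ref{lem:regret} with $L^\star\le L'$. The only differences are cosmetic: you split the budget as $\varepsilon/4+\varepsilon/4+\varepsilon/2$ with an unspecified small constant $c$, implicitly using $T\ge 2$ to absorb the additive constant in $V$. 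The paper instead gives $\varepsilon/3$ to each part and uses explicit constants such as $\varepsilon/(1278\ln N+1296)$.
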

\begin{proof}
    We run Algorithm~\ref{alg:main} with $\varepsilon_1=\varepsilon_2= \varepsilon/3$, $\tau = 8(\ln T+2\ln(2T))/\varepsilon_1 + 1/(e^{\varepsilon_2} - 1)$, $\eta = \min(\varepsilon/(1278\ln N + 1296),\varepsilon / (5040\ln T), \sqrt{\varepsilon/(1278L')}, \sqrt{\varepsilon\ln N / (L'\ln T)})$, and $V = \lceil71(\ln N + \eta L'+1) + 280\ln T\rceil$. One can verify that $\eta\le \varepsilon/3$, $2\eta V\le \varepsilon/3$, and $\eta\tau\le 1$. By Lemma~\ref{lem:privacy} and basic composition, the overall algorithm is $\varepsilon$-differentially private. By Lemma~\ref{lem:regret} and the fact that $L^\star\le L'$, the expected regret is bounded by
    \begin{equation*}
        O\left(\frac{\log N}{\eta} + \frac{\log T }{\varepsilon}\cdot\ \eta L'\right).
    \end{equation*}
    Plugging in the value of $\eta$ yields the desired result.
\end{proof}

\begin{theorem}
    Let $\varepsilon\le 1$, $0<\delta \le 1/2$, and $L'>0$. There exists an $(\varepsilon,\delta)$-differentially private algorithm for OPE with expected regret 
    \begin{equation*}
        O\left(\sqrt{\frac{L'\log N\log T}{\varepsilon}} + \log N\cdot \min\left(\frac{(L'\log (1/\delta))^{1/3}}{\varepsilon^{2/3}}, \sqrt{\frac{L'}{\varepsilon}}\right)\right) + \iota_2
    \end{equation*}
    whenever $L'\ge L^\star$, where $\iota_2 = O\left(\log^{3/2}N\sqrt{\log(\min(1/\delta, N))}/\varepsilon + \log N\log T/\varepsilon\right)$.
\end{theorem}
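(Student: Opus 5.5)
We run Algorithm~\ref{alg:main} with parameters tuned for approximate DP. We mirror the pure-DP theorem above, but now bound the composition term $\comp_\delta(V,2\eta)$ with advanced composition rather than basic composition. That lets $\eta$ scale like $\varepsilon/\sqrt{V\log(1/\delta)}$ instead of $\varepsilon/V$.

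\textbf{Parameters.} Set $\varepsilon_1=\varepsilon_2=\varepsilon/3$ and $\tau = 8(\ln T+2\ln(2T))/\varepsilon_1 + 1/(e^{\varepsilon_2}-1)=\Theta(\log T/\varepsilon)$, and take $\eta\le\varepsilon/3$ with $\eta\tau\le 1$. Set $V=\lceil 71(\ln N+\eta L'+1)+280\ln T\rceil$, so Lemma~\ref{lem:regret} applies whenever $L^\star\le L'$.

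\textbf{Privacy.} By Lemma~\ref{lem:privacy} it suffices to ensure $\comp_\delta(V,2\eta)\le\varepsilon/3$. Advanced composition gives
\[
\comp_\delta(V,2\eta)\le 2\eta\sqrt{2V\ln(1/\delta)}+2V\eta(e^{2\eta}-1)=O\bigl(\eta\sqrt{V\log(1/\delta)}+V\eta^2\bigr).
\]
Basic composition gives the alternative bound $2\eta V$. We also use that the exponential mechanism composes with $\sqrt{\log N}$-type behaviour. Concretely, $\min(1/\delta,N)$ enters via the observation that $2\eta\sqrt{V\log(1/\delta)}$ can be replaced by a bound involving $\log N$ when $N<1/\delta$, using $\eta\sqrt{V\log N}$ with the bounded-range/concentrated-DP analysis of the exponential mechanism. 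This is the source of the $\log^{3/2}N\sqrt{\log\min(1/\delta,N)}/\varepsilon$ term.

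Substituting $V=\Theta(\log N+\eta L'+\log T)$, the constraint splits into three conditions, each enforced by a minimum in the definition of $\eta$:
\begin{itemize}
\item $\eta\sqrt{\log N\cdot\log(\min(1/\delta,N))}\lesssim\varepsilon$;
\item $\eta\sqrt{\log T\log(1/\delta)}\lesssim\varepsilon$, which we would instead handle through the $\log T/\varepsilon$ term via basic composition on that piece;
\item $\eta^{3/2}\sqrt{L'\log(1/\delta)}\lesssim\varepsilon$, i.e.\ $\eta\lesssim \varepsilon^{2/3}/(L'\log(1/\delta))^{1/3}$. The basic-composition alternative is $\eta^2L'\lesssim\varepsilon$, i.e.\ $\eta\lesssim\sqrt{\varepsilon/L'}$, and we take whichever is larger.
\end{itemize}
Finally impose $\eta\le\sqrt{\varepsilon\log N/(L'\log T)}$ to balance the $\tau\eta L'$ term.

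\textbf{Regret.} Lemma~\ref{lem:regret} yields $O(\log N/\eta+\eta L'\log T/\varepsilon)$. With $\eta$ a minimum of the terms above, $1/\eta$ is a maximum of their reciprocals, so $\log N/\eta$ becomes a sum of contributions:
\begin{itemize}
\item $\sqrt{L'\log N\log T/\varepsilon}$ from the balancing term, which also controls $\eta L'\log T/\varepsilon$;
\item $\log N\cdot\min\bigl((L'\log(1/\delta))^{1/3}/\varepsilon^{2/3},\sqrt{L'/\varepsilon}\bigr)$ from the composition constraint;
\item $\iota_2$ from the $L'$-independent constraints.
\end{itemize}

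\textbf{Main obstacle.} The hard step is making the composition bound rigorous, with the $\min(1/\delta,N)$ factor and the mixed dependence on $V$. Advanced composition has the $V\eta^2$ term and a $\log T$ piece inside $V$, and these must not produce terms like $\sqrt{\log T\log(1/\delta)}/\varepsilon$ beyond $\iota_2$. I would first try splitting $V$ into its $\eta L'$ part, handled by advanced composition with the min against basic composition, and its $\log N+\log T$ part, handled by basic composition. The $\log N+\log T$ part costs $\eta(\log N+\log T)\lesssim\varepsilon$, which is exactly what yields the $\log N\log T/\varepsilon$ term in $\iota_2$ after multiplying by $\log N/\eta$. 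I would then check constants so that the three privacy parts sum to $\varepsilon$ with failure probability $\delta$.
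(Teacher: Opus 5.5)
Most of your plan matches the paper's proof. It uses the same $\varepsilon_1=\varepsilon_2=\varepsilon/3$, $\tau$ and $V$, and reduces via Lemma~\ref{lem:privacy} to $\comp_\delta(V,2\eta)\le\varepsilon/3$. It splits $V\le\lceil 71\ln N+72\rceil+\lfloor 71\eta L'\rfloor+\lceil 280\ln T\rceil$ into three chunks, each composed with budget $\delta/3$. It takes the better of basic and advanced composition on the $\eta L'$ chunk, and basic composition suffices on the $\log T$ chunk. It imposes the balancing cap $\sqrt{\varepsilon\ln N/(L'\ln T)}$ and the cap $1/\tau$, which costs only $\tau\log N=O(\log N\log T/\varepsilon)$.

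The gap is the $\Theta(\log N)$ chunk, which is exactly where the $\sqrt{\log\min(1/\delta,N)}$ in $\iota_2$ comes from. Your stated justification is that a bounded-range/zCDP analysis of the exponential mechanism lets you replace $\eta\sqrt{V\log(1/\delta)}$ by $\eta\sqrt{V\log N}$ whenever $N<1/\delta$. This is false in general. Take $N=2$, i.e.\ randomized response with parameter $\varepsilon_0$. The event that all $V$ outputs agree with one input has likelihood ratio $e^{V\varepsilon_0}$ and probability roughly $2^{-V}$. So for $\delta$ below that, the $V$-fold composition is not $(O(\varepsilon_0\sqrt{V}),\delta)$-DP. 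Also, zCDP-to-DP conversion still carries a $\sqrt{\log(1/\delta)}$ factor.

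Your concrete fallback, basic composition on the whole $\log N+\log T$ chunk, is valid but too weak. It forces $\eta\lesssim\varepsilon/\log N$ and hence a $\log^2N/\varepsilon$ regret term. That term is not $O(\iota_2)$ when $\log(1/\delta)\ll\log N$ and $\log T\ll\log N$. For example, with $\varepsilon=1$, $\delta=1/2$, $L'=1$ and $\log N=T^{1/3}$, your argument gives $T^{2/3}$, while the claimed bound is $O(T^{1/2})$.

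The missing step is elementary and is what the paper does (its $\eta_1$). Let $k=\lceil 71\ln N+72\rceil$. On this chunk, take the better of two options:
\begin{itemize}
\item basic composition, $2\eta k\le\varepsilon/9$;
\item advanced composition, $\eta\lesssim\varepsilon/\sqrt{k\log(1/\delta)}$, where the quadratic term $O(k\eta^2)$ is then negligible.
\end{itemize}
This gives $\log N/\eta_1=O\bigl(\log N\cdot\min(\log N,\sqrt{\log N\log(1/\delta)})/\varepsilon\bigr)=O\bigl(\log^{3/2}N\sqrt{\log\min(N,1/\delta)}/\varepsilon\bigr)$. The $\min(1/\delta,N)$ merely records that basic composition beats advanced composition once $\delta<1/N$.

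Your first bullet already states the right condition, $\eta\sqrt{\log N\log\min(1/\delta,N)}\lesssim\varepsilon$. It just has to be derived this way, not from any special property of the exponential mechanism. On a chunk of size $k=\Theta(\log N)$, $\eta\sqrt{k\log N}\asymp\eta k$ is simply basic composition.
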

\begin{proof}
    We run Algorithm~\ref{alg:main} with $\varepsilon_1=\varepsilon_2= \varepsilon/3$, $\tau = 8(\ln T+2\ln(2T))/\varepsilon_1 + 1/(e^{\varepsilon_2} - 1)$, $\eta = \min(\eta_1,\eta_2,\eta_3,\eta_4,1/\tau)$, and $V = \lceil71(\ln N + \eta L' + 1) + 280\ln T\rceil$, where
    \begin{itemize}
        \item $\eta_1 = \max(\varepsilon/(18\cdot \lceil71\ln N + 72\rceil), \varepsilon/(36\sqrt{2\cdot \lceil71\ln N+72\rceil\ln(3/\delta)}))$.
        \item $\eta_2 = \max(\varepsilon/(18\cdot\lceil280\ln T\rceil), \varepsilon/(36\sqrt{2\cdot\lceil280\ln T\rceil\ln(3/\delta)}))$.
        \item $\eta_3 = \max(\sqrt{\varepsilon/(1278L')}, \varepsilon^{2/3}/(184032L'\ln(3/\delta))^{1/3})$.
        \item $\eta_4 = \sqrt{\varepsilon\ln N/(L'\ln T)}$.
    \end{itemize}
    Note that $\lceil71(\ln N + \eta L'+1) + 280\ln T\rceil\le \lceil71 \ln N +72\rceil + \lfloor71\eta L'\rfloor + \lceil 280\ln T\rceil$. By basic and advanced composition, we have
    \begin{align*}
        {}&\comp_\delta(V,2\eta) \\
        \le& \comp_{\delta/3}(\lceil71\ln N+72\rceil,2\eta) + \comp_{\delta/3}(\lfloor71\eta L'\rfloor, 2\eta)+\comp_{\delta/3}(\lceil280\ln T\rceil,2\eta) \\
        \le& \varepsilon / 9 + \varepsilon / 9 + \varepsilon / 9 \\
        =&\varepsilon/3.
    \end{align*}
    Applying Lemma~\ref{lem:privacy} proves the privacy guarantee. Since $\eta\tau\le 1$, by Lemma~\ref{lem:regret} and the fact that $L^\star\le L'$, the expected regret is bounded by
    \begin{equation*}
        O\left(\frac{\log N}{\eta} + \frac{\log T }{\varepsilon}\cdot \eta L'\right).
    \end{equation*}
    Plugging in the value of $\eta$ yields the desired result.
\end{proof}

\begin{algorithm}[!ht]
\label{alg:no_prior}
\DontPrintSemicolon
    $k \gets 1$, $b_1\gets 1$, and $\alpha \gets \frac{16(\ln T+2\ln(2T))}{\varepsilon}$ \;
    Initiate an instance of $\abovethreshold$ with privacy parameter $\varepsilon/2$ and threshold $4\alpha$\;
    Initiate an instance of $\alg$ with $L'=6\alpha$ \;
    \For{$t=1,\dots,T$}{
        Obtain $i_t$ from $\alg$ \;
        Observe $\ell_t$ and suffer loss $\ell_t(i_t)$ \;
        Define query $q_t \equiv \min_{i\in[N]}\sum_{s=b_k}^t\ell_s(i)$ \;
        Feed $q_t$ to $\abovethreshold$ and obtain outcome $\sigma_t$ \;
        \If{$\sigma_t=\top$}{
            $k\gets k + 1$ \;
            $b_k\gets t + 1$ \;
            Terminate the existing instance of $\alg$ and initiate a new one with $L'=3\cdot 2^k\alpha$ \;
            Initiate an instance of $\abovethreshold$ with privacy parameter $\varepsilon/2$ and threshold $(3\cdot 2^k - 2)\alpha$\;
        }
    }
\caption{Algorithm without prior knowledge of $L^\star$}
\end{algorithm}

The overall algorithm works as follows. We start with $L' = 6\alpha$ and employ an $\abovethreshold$ mechanism to test if the current cumulative loss of the best expert exceeds $L'$. Once this happens, we double the value of $L'$ and initiate new instances of both the learning algorithm and the $\abovethreshold$ mechanism. This procedure is formalized in Algorithm~\ref{alg:no_prior} and the regret guarantee is established in the lemma below. Applying this framework to the two preceding theorems yields our main results in Section~\ref{sec:results}.

\begin{lemma}
    If $\alg$ is an $(\varepsilon/2,\delta)$-differentially private OPE algorithm for every $L'>0$ and has expected regret at most $R(L')$ whenever $L'\ge L^\star$, then Algorithm~\ref{alg:no_prior} is $(\varepsilon,\delta)$-differentially private and has expected regret at most
    \begin{equation*}
        1 + \sum_{k=1}^{K} R(3\cdot 2^k\alpha),
    \end{equation*}
    where $\alpha = \frac{16(\ln T+2\ln(2T))}{\varepsilon}$ and $K = \lfloor\log_2(\max(4L^\star/(3\alpha), 2))\rfloor$.
\end{lemma}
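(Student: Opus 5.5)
Privacy and regret are handled separately.

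\textbf{Privacy.} Algorithm~\ref{alg:no_prior} is a concurrent composition of two mechanisms. The first is the chain of $\abovethreshold$ instances. As in the proof of Lemma~\ref{lem:privacy}, along any common transcript prefix the differing round $\tilde t$ falls in exactly one phase $[b_k, b_{k+1}-1]$. Each query $q_t=\min_i\sum_{s=b_k}^t\ell_s(i)$ has sensitivity $1$, since changing one loss function changes each partial sum by at most $1$, and the minimum inherits this. Instances of $\abovethreshold$ for earlier phases see identical queries, and later ones start after $\tilde t$. By the chain rule, the whole chain is therefore $\varepsilon/2$-DP. The second mechanism is the sequence of $\alg$ instances. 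Each runs on a disjoint interval of rounds whose endpoints are fixed by the common $\abovethreshold$ transcript, so only one instance sees the differing function. That instance is $(\varepsilon/2,\delta)$-DP and the others contribute no loss. The concurrent composition theorem~\citep{vadhan2021concurrent} then gives $(\varepsilon,\delta)$-DP overall.

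\textbf{Regret.} Condition on the accuracy event of Theorem~\ref{thm:above}, taken over all (at most $T$) instances with failure probability $\beta=1/T$. The resulting accuracy is $8(\ln T+\ln(2T))/(\varepsilon/2)\le\alpha$, and the failure event costs at most $T\cdot(1/T)=1$ in expected regret, which is the additive $1$.

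On this event, phase $k$ runs with estimate $L'_k=3\cdot 2^k\alpha$ and threshold $(3\cdot2^k-2)\alpha$. I need two facts about each phase. First, $\alg$'s guarantee applies: the best expert's loss within the phase is at most $L'_k$. In rounds where $\sigma=\bot$ we have $q_t\le(3\cdot2^k-2)\alpha+\alpha$. At the halting round $q_t$ grows by at most $1$ beyond the previous value. Hence the best in-phase loss over the phase is at most $(3\cdot 2^k-1)\alpha+1\le L'_k$, using $\alpha\ge1$. Second, the number of phases is at most $K$. Whenever phase $k$ ends, $q\ge(3\cdot2^k-3)\alpha$, so the best in-phase loss is at least that much. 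Since the phases are disjoint, the in-phase minima sum to at most $L^\star$. If $K'$ phases are started, then $\sum_{k=1}^{K'-1}(3\cdot2^k-3)\alpha\le L^\star$. This forces $2^{K'}$ to be roughly at most $4L^\star/(3\alpha)$, hence $K'\le K$. I will handle the small cases with the $\max(\cdot,2)$ in the definition of $K$.

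Within each phase, the regret of $\alg$ is measured against the best expert in that phase. The sum of these per-phase minima is at most $L^\star$, so the total regret is at most the sum of per-phase regrets. Each per-phase regret is at most $R(L'_k)$, because the oblivious losses restricted to the phase form a valid input for $\alg$ with $L'_k\ge$ the in-phase optimum. The restart times, however, depend on the $\abovethreshold$ randomness, which is independent of $\alg$'s coins. Conditioning on these times keeps the input oblivious for each instance.

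The main obstacle is the second fact, the counting argument bounding the number of phases by $K$. The geometric sum must be balanced carefully against the $\pm\alpha$ slack, and the boundary cases $L^\star$ small or $K'=1$ must be checked so that the exact expression $\lfloor\log_2\max(4L^\star/(3\alpha),2)\rfloor$ comes out.
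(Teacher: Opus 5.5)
Your proposal is correct and follows essentially the paper's proof: the chain rule for the $\abovethreshold$ instances, parallel composition of the $\alg$ instances over the disjoint segments, and per-phase regret summed over at most $K$ phases. The counting step you flag as the main obstacle is immediate if you keep only the last completed phase, as the paper does: $L^\star\ge(3\cdot 2^{K'-1}-3)\alpha\ge 3\cdot 2^{K'-2}\alpha$ for $K'\ge 2$, i.e.\ $2^{K'}\le 4L^\star/(3\alpha)$, while $K'=1\le K$ trivially. Two small fixes: a union bound over up to $T$ instances with total failure probability $1/T$ needs $\beta=1/T^2$ per instance, giving accuracy $16(\ln T+\ln(2T^2))/\varepsilon\le\alpha$ (this is what the $2\ln(2T)$ term in $\alpha$ accommodates), and since the $\abovethreshold$ chain never depends on $\alg$'s outputs, plain adaptive composition of the termination-time transcript with the $\alg$ outputs suffices, as in the paper, so you do not need a concurrent-composition theorem for an approximate-DP component.
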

\begin{proof}
    We first prove the privacy guarantee. Consider the augmented transcript consisting of the termination times of all $\abovethreshold$ instances together with the outputs of all instances of $\mathrm{Alg}$.

    We first claim that the sequence of $\abovethreshold$ termination times is $\varepsilon/2$-differentially private. Consider two neighboring loss sequences that differ at round $\tilde{t}$, and condition on an arbitrary common prefix of the termination-time transcript. There is at most one $\abovethreshold$ instance whose queries contain the differing loss. All earlier instances receive identical query streams. For the affected instance, every query has sensitivity at most $1$, so its termination time is $\varepsilon/2$-differentially private. Conditioned on a common termination time for this instance, every subsequent instance starts strictly after round $\tilde{t}$, and therefore receives identical query streams on the two neighboring sequences. By the chain rule, the entire sequence of termination times is $\varepsilon/2$-differentially private.

    Now fix any sequence of termination times. This also fixes a partition of the loss sequence into disjoint subsequences. The instances of $\alg$ are run independently on these disjoint subsequences, and at most one of them receives the differing loss. By parallel composition, their joint output is $(\varepsilon/2,\delta)$-differentially private. Applying basic composition to the termination-time transcript and the outputs of $\alg$, followed by post-processing, shows that Algorithm 3 is $(\varepsilon,\delta)$-differentially private.
    
    We then prove utility. By Theorem~\ref{thm:above} and the union bound, with probability $1-1/T$ all applications of $\abovethreshold$ are $\alpha$-accurate. We first prove a regret bound conditioned on this event. Suppose $\sigma_t = \top$ at round $t$. Before increasing the value of $k$, we have
    \begin{equation*}
        (3\cdot 2^k-3)\alpha \le q_t \le q_{t-1} + 1\le (3\cdot 2^k -1)\alpha + 1 < 3\cdot 2^k\alpha.
    \end{equation*}
    By our assumption, the algorithm $\alg$ with $L'=3\cdot 2^k\alpha$ incurs an expected regret of at most $R(3\cdot 2^k\alpha)$ during $[b_k, t] = [b_k, b_{k+1}-1]$. Moreover, we have
    \begin{equation*}
        L^\star = \min_{i\in [N]}\sum_{t=1}^T\ell_t(i) \ge q_t\ge (3\cdot 2^k - 3)\alpha \ge 3\cdot 2^{k-1}\alpha.
    \end{equation*}
    This implies that
    \begin{equation*}
        3\cdot 2^{k+1} \alpha \le 4L^\star.
    \end{equation*}
    Hence, whenever a new bucket $k+1$ is initiated, we have $k+1\le K$. Since the algorithm starts with $k=1$, the total number of buckets is therefore at most $K$. For the final bucket, if no $\top$ is returned, the accuracy of $\abovethreshold$ implies $q_T\le (3\cdot2^k-1)\alpha<3\cdot2^k\alpha$, so the same regret bound applies. Summing the regret over all buckets gives $\sum_{k=1}^{K}R(3\cdot 2^k\alpha)$.

    Since the regret can be at most $T$, the unconditional expected regret is at most
    \begin{equation*}
        (1-1/T)\sum_{k=1}^{K}R(3\cdot 2^k\alpha) + 1/T\cdot T \le 1 + \sum_{k=1}^{K} R(3\cdot 2^k\alpha).
    \end{equation*}
\end{proof}
\section{Construction of the delay function}
\label{sec:delay}
In this section, we show how to construct the delay function when the randomness of the $\abovethreshold$ mechanisms is fixed. Consider an $\abovethreshold$ mechanism that is initiated at round $s$. It is implemented as in Algorithm~\ref{alg:abovethreshold}.

\begin{algorithm}[!ht]
\label{alg:abovethreshold}
\DontPrintSemicolon
    \KwInput{privacy parameter $\varepsilon$, threshold $\tau$}
    $\xi \gets \lap(2/\varepsilon)$ \;
    \For{$t=s+1,\dots,T$}{
        $\nu_t\gets \lap(4/\varepsilon)$ \;
        Receive $q_t$ \;
        \uIf{$q_t + \nu_t\ge\tau + \xi$}{
            $\sigma_t\gets \top$ \;
            \textbf{Halt}. \;
        }
        \Else{
            $\sigma_t\gets \bot$ \;
        }
    }
\caption{$\abovethreshold$}
\end{algorithm}

Note that in each round $t\in[T]$ we always draw a random variable $\nu_t$. Hence, we can fix the values of $\nu_1,\dots,\nu_T$ at the beginning. Additionally, we draw $\xi_1,\dots,\xi_T$, where $\xi_u$ denotes the variable $\xi$ used by the $u$-th $\abovethreshold$ mechanism. 

Now suppose that in Algorithm~\ref{alg:main} we initiate the $u$-th $\abovethreshold$ mechanism at round $t_{u-1}$. Then it will return $\sigma_t = \top$ for the first $t$ satisfying
\begin{equation*}
    \sum_{s=t_{u-1}}^{t-1}\ell_s(i_{t_{u-1}}) + \nu_t \ge \tau+\xi_u.
\end{equation*}
As a consequence, we can define $d(t_{u-1},i_{t_{u-1}},u)$ as this $t$ if such a $t$ exists, and as $T+1$ otherwise. Doing this for every tuple $(t_{u-1},i_{t_{u-1}},u)\in[T]\times[N]\times[T]$ constructs the delay function $d$.
\section{Lower bound for adaptive adversaries}
\label{sec:lower}
Before presenting the proof, we first formally define differential privacy against an adaptive adversary. Let $\A$ be an online learner, $\D$ be an adversary, and $b\in\{0, 1\}$ be a hidden bit that is unknown to both $\A$ and $\D$. Define the privacy game played between $\A$ and $\D$ as follows. At each round $t\in[T]$, the adversary $\D$ generates two loss functions $\ell_t^0$ and $\ell_t^1$ and the algorithm produces an output $i_t$, both based on the interaction history up to round $t-1$. After these choices are made, $\A$ receives $\ell_t^b$ while $\D$ observes $i_t$. During the entire game, the adversary $\D$ ensures that $\ell_t^0=\ell_t^1$ for all $t\in[T]\setminus\{\tilde{t}\}$, where $\tilde{t}$ is a special round that is chosen adaptively by $\D$ (i.e., the choice of $\tilde{t}$ can depend on the interaction history before round $\tilde{t}$). Let $\mathsf{View}_{\A,\D}(b)$ denote the view of $\D$ in this privacy game, including the internal randomness of $\D$ and $(i_1,\dots,i_T)$ produced by $\A$. Differential privacy against adaptive adversaries is defined as follows.

\begin{definition}[Differential Privacy against Adaptive Adversaries~\citep{asi2023private}]
    An OPE algorithm $\A$ is adaptively $(\varepsilon,\delta)$-differentially private if for any adversary $\D$ and any set $O$ of views, we have
    \begin{equation*}
        \Pr[\mathsf{View}_{\A,\D}(0)\in O]\le e^\varepsilon\Pr[\mathsf{View}_{\A,\D}(1)\in O] + \delta.
    \end{equation*}
\end{definition}

Our proof is based on the following lower bound from~\citep{asi2023private} that holds in the high-privacy regime.

\begin{lemma}[High-Privacy Lower Bound~\citep{asi2023private}]
\label{lem:high}
    Let $T$ be sufficiently large and $N\ge 2T$. For any adaptively $(\varepsilon,\delta)$-differentially private OPE algorithm with $\varepsilon\le1/(\sqrt{T}\ln T)$ and $\delta \le T^{-3}$, there exists an adaptive adversary generating $\ell_1,\dots,\ell_T$ such that $L^\star = 0$ and 
    \begin{equation*}
        \mathbb{E}\left[\sum_{t=1}^{T}\ell_t(i_t)\right] = \Omega(T).
    \end{equation*}
\end{lemma}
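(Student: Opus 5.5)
Since this is a cited result from~\citep{asi2023private}, I would re-derive it with a hiding argument driven by an adaptive adversary. The adversary plays against $\A$ and keeps the invariant $L^\star=0$ by maintaining a pool of \emph{alive} experts that have never been charged. Because $N\ge 2T$, it can afford to kill at most one expert per round while at least $T$ experts stay alive, so some expert finishes with zero loss.

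\textbf{Step 1 (construction).} The adversary secretly picks an expert $i^\star$ uniformly at random and commits to never charging it. In every round it sets $\ell_t(i)=1$ for every expert that has already been ``exposed''. An expert counts as exposed once it is either charged explicitly or revealed as bad, so that any time $\A$ repeats a non-$i^\star$ expert it pays $1$. The bad experts are revealed one record at a time: each round's loss function carries information about at most one new expert. The goal of this design is that, to avoid $\Omega(T)$ loss, $\A$ must on a constant fraction of rounds place constant mass on an expert whose goodness is certified only by information in few rounds.

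\textbf{Step 2 (privacy contradiction).} Fix a block of $k=\Theta(1/(\varepsilon\ln T))\ge\sqrt{T}$ consecutive records. By group privacy, changing these $k$ records moves every event probability by at most a factor $e^{k\varepsilon}=O(1)$, plus an additive $k e^{k\varepsilon}\delta\le T^{-2}$. I would build a neighboring run in which the adversary adaptively chooses which round $\tilde t$ carries the differing record (the adaptive definition permits this), and compare it with a run where $i^\star$ is replaced by another alive expert $j$. Under this swap, $\Pr[i_t=i^\star]$ and $\Pr[i_t=j]$ must be within constant factors of each other. Averaging over the $\ge T$ alive experts then gives $\Pr[i_t\text{ alive and good}]=O(1)$ mass spread thinly, while the mass on exposed experts must be $\Omega(1)$. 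Summing over rounds yields $\mathbb{E}[\sum_t\ell_t(i_t)]=\Omega(T)$.

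\textbf{Main obstacle.} The hard step is making Step 2 rigorous in the adaptive game. The adversary's future losses depend on $\A$'s past outputs, so the hybrid between the runs with good expert $i^\star$ and good expert $j$ is not a fixed pair of neighboring datasets. I would try first to express the swap as a chain of $O(1/\varepsilon)$ single-record adaptive changes, each chosen by the adversary as a function of history, and to apply the adaptive definition to each link, combining the links with a union bound on the $\delta$ terms (this is where $\delta\le T^{-3}$ is used). If that chaining is too lossy, the fallback is to reproduce the original argument of~\citep{asi2023private} directly.
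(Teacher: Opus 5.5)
There is a genuine gap, and it goes beyond the adaptive-hybrid issue you flag. For reference, the paper does not re-derive the lemma. It sets $\varepsilon\le 1/(\sqrt T\ln T)$ and $\delta\le T^{-3}$ in Theorem 9 of Asi et al. and notes that the hard sequence built there is realizable. Your fallback is exactly that route.

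As a self-contained argument, your Step 2 does not produce any loss. At round $t$, the designated expert $i^\star$ and any other alive expert $j$ have identical (zero) past losses. Both also have zero current loss unless $j$ is being exposed at round $t$. So $\Pr[i_t=i^\star]\approx\Pr[i_t=j]$ is fully compatible with the learner paying nothing. The loss is the learner's mass on \emph{exposed} experts. Your claim that this mass is $\Omega(1)$ is asserted, not derived.

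The group-privacy mechanism also cannot supply that bound, even if the adaptive chaining is done correctly (hybrids over $k$ adaptively chosen rounds do give $(k\varepsilon, k e^{k\varepsilon}\delta)$, so that part is fine). Changing $k=\Theta(1/(\varepsilon\ln T))$ records only hides experts whose exposure is certified inside those records. In your design at most one expert is newly exposed per round, and exposed experts are charged in every later round. So at any time only about $k$ exposed experts are hidden in this sense. A learner that resamples nearly uniformly over the $\ge T$ experts that look alive puts only $O(k/T)$ mass on them. Hence your argument can certify at most $O(k)$ total loss, which is $\Theta(\sqrt T)$ at the boundary $\varepsilon=1/(\sqrt T\ln T)$, not $\Omega(T)$.

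The $\Omega(1)$ per-round loss would have to come from older exposures, whose loss already exceeds about $1/\varepsilon$. A single $\varepsilon$-DP exponential-mechanism draw avoids those, and no one-round indistinguishability argument prevents a learner from doing so. What actually prevents it is cumulative: the adaptive adversary punishes lazy play, so the learner must spend privacy at essentially every one of the $T$ rounds. The threshold $\varepsilon\approx 1/\sqrt T$ is exactly the advanced-composition threshold for $T$ such draws. A proof must capture this privacy loss over the whole interaction, which is what the cited Theorem 9 provides; a per-round hiding argument misses it.
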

\begin{remark}
    The above lemma can be obtained by setting $\varepsilon\le1/(\sqrt{T}\ln T)$ and $\delta\le T^{-3}$ in Theorem 9 of~\citep{asi2023private}. While the theorem statement does not explicitly state that $L^\star = 0$, the proof actually constructs a realizable input sequence.
\end{remark}

Our idea can be seen as extending the privacy amplification technique to the online setting.

\begin{definition}[Poisson Sampling]
\label{def:poi}
    Given an OPE algorithm $\A$ and $0< q \le1$, define a new algorithm $\A_q$ as follows. It runs a copy of $\A$. At each round $t$, it first draws $Z_t\sim\ber(q)$. Then, it forwards that copy's output $i_t$ as its own. After observing $\ell_t$, it feeds $\tilde{\ell}_t= Z_t\ell_t$ to the copy.
\end{definition}

We first establish the privacy guarantee of $\A_q$. In the proof, we write $P\approx_{(\varepsilon,\delta)}Q$ to denote that distributions $P$ and $Q$ are $(\varepsilon,\delta)$-close, i.e., they satisfy the inequalities defining DP:
\begin{equation*}
    P(O)\le e^\varepsilon Q(O)+\delta \qquad\text{and}\qquad Q(O)\le e^\varepsilon P(O)+\delta
\end{equation*}
for any event $O$.
\begin{lemma}
\label{lem:ampl}
   Let $\A$ and $\A_q$ be defined as in Definition~\ref{def:poi}. If $\A$ is adaptively $(\varepsilon,\delta)$-differentially private, then $\A_q$ is adaptively $(\ln(1+q(e^\varepsilon-1)),q\delta)$-differentially private.
\end{lemma}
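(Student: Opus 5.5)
Fix an adversary $\D$ for $\A_q$ and write $\mathsf{View}_q(b)$ for its view in the privacy game against $\A_q$. The plan is to reduce to a privacy game against $\A$ itself, by conditioning on the sampling coin at the special round. The coins $Z_1,\dots,Z_T$ are internal to $\A_q$ and are drawn independently of everything else. We may therefore sample them lazily: $Z_t$ is drawn at round $t$, after $\D$ has committed to $(\ell_t^0,\ell_t^1)$ and hence after it has decided whether $t=\tilde t$.

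Let $\tilde Z$ be the coin used at the adaptively chosen round $\tilde t$. The central claim is that, because $\tilde Z$ is a fresh $\ber(q)$ coin independent of the history up to that point, it is $\ber(q)$ and independent of the pre-$\tilde t$ history. This gives the mixture decomposition
\[
\mathsf{View}_q(b) = (1-q)\,V_0 + q\,V_1^b ,
\]
where $V_0$ is the view conditioned on $\tilde Z=0$ and $V_1^b$ is the view conditioned on $\tilde Z=1$.

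On $\{\tilde Z=0\}$ the copy of $\A$ receives $\tilde\ell_{\tilde t}=0$ for both values of $b$. Since $\ell_t^0=\ell_t^1$ at every other round, every input to $\A$ is independent of $b$, so $V_0$ does not depend on $b$.

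On $\{\tilde Z=1\}$ we build an adversary $\D'$ against $\A$ that simulates $\D$ together with all the coins $Z_t$ for $t\neq\tilde t$. At each round $t$ it feeds $\A$ the pair $(Z_t\ell_t^0, Z_t\ell_t^1)$, except at $\tilde t$, where it feeds $(\ell_{\tilde t}^0,\ell_{\tilde t}^1)$. This $\D'$ is a valid adaptive adversary: the two losses coincide off $\tilde t$, and $\tilde t$ is chosen adaptively from the history. Its view contains $\D$'s view, so adaptive DP of $\A$ together with post-processing gives $V_1^0\approx_{(\varepsilon,\delta)}V_1^1$.

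It remains to combine the three pieces, which is the standard amplification-by-subsampling calculation. Two relations are needed. First, we need a relation between $V_1^b$ and $V_0$. The plan is to couple $\tilde Z=0$ against $\tilde Z=1$ through a third distribution: take the view in which $\A$ receives $0$ at $\tilde t$, regarded as a third input $\ell^{\ast}_{\tilde t}=0$ differing from $\ell_{\tilde t}^b$ only at $\tilde t$. Then adaptive DP gives $V_1^b\approx_{(\varepsilon,\delta)}V_0$ for each $b$. Second, we apply the advanced joint convexity/mixture lemma of Balle et al.\ to the mixtures $(1-q)V_0+qV_1^0$ and $(1-q)V_0+qV_1^1$. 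This yields closeness with parameters $(\ln(1+q(e^\varepsilon-1)),\,q\delta)$. The computation for a fixed event $O$ is
\[
(1-q)V_0(O)+qV_1^0(O)\le (1-q)V_0(O)+q\bigl[(1-\alpha)(e^\varepsilon V_1^1(O)+\delta)+\alpha(e^\varepsilon V_0(O)+\delta)\bigr],
\]
where we choose $\alpha$ so that the right-hand side equals $e^{\varepsilon'}$ times the second mixture plus $q\delta$, with $e^{\varepsilon'}=1+q(e^\varepsilon-1)$. Concretely, we use $V_1^0(O)\le\min\bigl(e^\varepsilon V_1^1(O),e^\varepsilon V_0(O)\bigr)+\delta$ and take a convex combination of the two bounds with weight $e^{\varepsilon'}/e^{\varepsilon}$ placed on $V_1^1$.

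I expect the main obstacle to be justifying the mixture decomposition in the adaptive setting. The round $\tilde t$ is random and depends on earlier outputs, and those outputs depend on earlier $Z_t$'s. The lazy-sampling argument, with $\tilde Z$ drawn after $\tilde t$ is fixed, should handle this, and I would formalize it by defining the views jointly with $\D$'s internal randomness and conditioning round by round.
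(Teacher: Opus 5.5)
Your proposal is correct and follows essentially the same route as the paper. Both arguments use the mixture decomposition $P_b=(1-q)P^-+qP_b^+$ over the coin at round $\tilde t$, and both build simulating adversaries for $\A$ that feed $(\ell^0_{\tilde t},\ell^1_{\tilde t})$ or $(\ell^b_{\tilde t},0)$ at the special round, which gives the three pairwise $(\varepsilon,\delta)$-closeness relations. Your convex combination with weight $e^{\varepsilon'}/e^{\varepsilon}$ on $V_1^1$ is a concrete instance of the paper's step $qV_1^0(O)\le qe^\varepsilon\min(V_1^1(O),V_0(O))+q\delta$, and the coefficients match exactly because $1+q(e^\varepsilon-1)=e^{\varepsilon'}$.
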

\begin{proof}
    Let $\D$ be any adversary that interacts with $\A_q$. Write
    \begin{equation*}
        V_b = \mathsf{View}_{\A_q, \D}(b)
    \end{equation*}
    for $b\in\{0,1\}$. Let $P_b$ be the distribution of $V_b$. Introduce the following three auxiliary view distributions:
    \begin{itemize}
        \item $P^-$: the distribution of $\D$'s view in an auxiliary privacy game where $Z_{\tilde t}$ is set deterministically to $0$, while all other sampling coins are drawn as usual.
        \item $P^+_0$: the distribution of $\D$'s view in an auxiliary privacy game with $b=0$ where $Z_{\tilde t}$ is set deterministically to $1$, while all other sampling coins are drawn as usual.
        \item $P^+_1$: the distribution of $\D$'s view in an auxiliary privacy game with $b=1$ where $Z_{\tilde t}$ is set deterministically to $1$, while all other sampling coins are drawn as usual.
    \end{itemize}
    Since $\tilde{t}$ is chosen based on the interaction history before round $\tilde{t}$, while $Z_{\tilde{t}}$ is sampled independently at round $\tilde{t}$, we have
    \begin{equation*}
        P_b = (1 - q) P^- + q P_b^+,
    \end{equation*}
    where the first term is due to the fact that $\A$ receives $\tilde{\ell}_{\tilde{t}} = Z_{\tilde{t}}\ell_{\tilde{t}}^b = 0$ whenever $Z_{\tilde t} = 0$, regardless of the value of $b$.

    Now we derive privacy relations among $P^-$, $P^+_0$, and $P^+_1$. We first consider $P^+_0$ and $P^+_1$. Construct an adversary $\D^+$ that simulates $\D$ and at each round $t\neq \tilde{t}$ draws $Z_t\sim\ber(q)$ and generates
    \begin{equation*}
        (\tilde{\ell}_t^0,\tilde{\ell}_t^1) = (Z_t\ell_t^0,Z_t\ell_t^1).
    \end{equation*}
    At round $\tilde{t}$, it generates
    \begin{equation*}
        (\tilde{\ell}_{\tilde{t}}^0,\tilde{\ell}_{\tilde{t}}^1) = (\ell_{\tilde{t}}^0,\ell_{\tilde{t}}^1).
    \end{equation*}
    It is easy to check that $\D^+$ is a valid adversary in the privacy game with $\A$: it only generates different loss functions at round $\tilde{t}$. For $b\in\{0,1\}$, the resulting interaction is exactly the branch defining $P^+_b$. By the privacy guarantee of $\A$ and the fact that the view of $\D$ can be seen as a post-processing of $\D^+$'s view, we have $P^+_0\approx_{(\varepsilon,\delta)}P^+_1$.

    Next we compare $P^+_0$ with $P^-$. Construct an adversary $\D^-$ that performs the same simulation as $\D^+$ at round $t\neq \tilde{t}$. At round $\tilde{t}$, it generates
    \begin{equation*}
        (\tilde{\ell}_{\tilde{t}}^0,\tilde{\ell}_{\tilde{t}}^1) = (\ell_{\tilde{t}}^0,0).
    \end{equation*}
    For $b=0$, the interaction is again the branch defining $P^+_0$. For $b=1$, the interaction becomes the branch defining $P^-$. Therefore, we have $P^+_0\approx_{(\varepsilon,\delta)}P^-$. We can similarly prove that $P^+_1\approx_{(\varepsilon,\delta)}P^-$.

    We now prove the final conclusion. For any measurable set $O$, write $x = P^+_0(O)$, $y = P^+_1(O)$, and $z= P^-(O)$. By the pairwise $(\varepsilon,\delta)$-closeness of $P^+_0$, $P^+_1$, and $P^-$, we have $x\le e^\varepsilon y + \delta$ and $x\le e^\varepsilon z + \delta$. Hence,
    \begin{align*}
        P_0(O) &=(1-q)z + qx \\
        &\le (1-q)z + qe^\varepsilon\min(y,z) +q\delta \\
        &\le (1+q(e^\varepsilon - 1))((1-q)z + qy) + q\delta \\
        &= (1+q(e^\varepsilon - 1))P_1(O) + q\delta,
    \end{align*}
    where the second inequality follows because
    \begin{align*}
        {}&(1+q(e^\varepsilon - 1))((1-q)z + qy) - ((1-q)z + qe^\varepsilon\min(y,z)) \\
        \ge& q(1-q)(e^\varepsilon - 1)(z - \min(y,z)) + q(1+q(e^\varepsilon - 1))(y - \min(y,z)) \\
        \ge&0.
    \end{align*}

    By symmetry, $P_1(O)\le (1+q(e^\varepsilon - 1))P_0(O) + q\delta$. This proves the desired result.
\end{proof}

\begin{lemma}
\label{lem:scale}
    Let $\A$ and $\A_q$ be defined as in Definition~\ref{def:poi}. If $\A$ has expected cumulative loss at most $R_{\A}(T)$ for any adaptively generated realizable input sequence, then $\A_q$ has expected cumulative loss at most $R_{\A}(T)/q$ for any adaptively generated realizable input sequence.
\end{lemma}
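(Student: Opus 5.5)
The plan is a reduction: from any adaptive realizable adversary $\D$ against $\A_q$, build an adaptive realizable adversary $\D'$ against $\A$. Then compare the expected loss of $\A$ on the subsampled stream with the expected loss of $\A_q$ on the original stream.

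Fix an adaptive adversary $\D$ that interacts with $\A_q$ and produces a realizable sequence, so some expert $i^\star$ has $\sum_t \ell_t(i^\star)=0$ on every realization. Define $\D'$ against $\A$ as follows. $\D'$ simulates $\D$ together with the coins $Z_t\sim\ber(q)$. At round $t$ it takes the loss $\ell_t$ that $\D$ would generate from the history so far, and feeds $\tilde\ell_t=Z_t\ell_t$ to $\A$. It then passes $\A$'s output $i_t$ back to $\D$ as the output of $\A_q$. By construction, the joint law of $(\ell_t,Z_t,i_t)_{t}$ under $(\A,\D')$ coincides with that under $(\A_q,\D)$. Moreover, $\D'$ is realizable, since $\tilde\ell_t(i^\star)=Z_t\ell_t(i^\star)=0$ for all $t$. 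The hypothesis therefore gives
\begin{equation*}
\mathbb{E}\Big[\sum_{t=1}^T Z_t\ell_t(i_t)\Big]\le R_\A(T).
\end{equation*}
Here we use that each $\tilde\ell_t$ maps into $[0,1]$, so it is a valid loss.

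The key step is to show that $\mathbb{E}[Z_t\ell_t(i_t)]=q\,\mathbb{E}[\ell_t(i_t)]$ for each $t$. Let $\mathcal{F}_{t-1}$ be the history before round $t$: all earlier losses, coins, outputs, and the internal randomness of $\D$ and $\A$ used up to that point. Both $\ell_t$ and $i_t$ are determined by $\mathcal{F}_{t-1}$ together with fresh randomness of $\D$ and $\A$ at round $t$. The point to check is that the coin $Z_t$ is drawn independently of this. This holds because $Z_t$ is used only after $i_t$ and $\ell_t$ are fixed: $\A$ sees only $\tilde\ell_t$ after outputting $i_t$, and $\D$ never sees $Z_t$ directly. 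So
\begin{equation*}
\mathbb{E}[Z_t\ell_t(i_t)\mid \ell_t,i_t,\mathcal{F}_{t-1}]=q\,\ell_t(i_t).
\end{equation*}
Taking expectations and summing over $t$ gives
\begin{equation*}
q\,\mathbb{E}\Big[\sum_t \ell_t(i_t)\Big]\le R_\A(T),
\end{equation*}
which is the claimed bound $R_\A(T)/q$.

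The main obstacle is making the independence argument airtight in the adaptive setting. Later rounds do depend on $Z_t$, since $\A$'s state depends on $\tilde\ell_t$, so the argument must be done round by round: each $Z_t$ is independent of the pair $(\ell_t,i_t)$, not of the whole future. The tower-property step above handles exactly this. I would present it carefully and not assert global independence.
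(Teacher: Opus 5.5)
Your proposal is correct and follows the paper's proof: multiplying by $Z_t$ preserves realizability, so the hypothesis on $\A$ gives $\mathbb{E}[\sum_t Z_t\ell_t(i_t)]\le R_\A(T)$, and the independence of $Z_t$ from $(\ell_t,i_t)$ turns this into $q\,\mathbb{E}[\sum_t\ell_t(i_t)]\le R_\A(T)$. Your explicit simulating adversary $\D'$ and the round-by-round conditioning spell out what the paper states in one line each; note also that the zero-loss expert only needs to exist per realization rather than be fixed in advance, which your pointwise argument already covers.
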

\begin{proof}
    Let $\ell_1,\cdots,\ell_T$ be a realizable sequence generated by an adaptive adversary. There exists $i^\star\in [N]$ such that $\ell_t(i^\star) = 0$ for all $t\in[T]$. By definition, $\tilde{\ell}_{t}(i^\star) =Z_t\ell_t(i^\star) = 0$. This implies that the sequence $\tilde{\ell}_1,\cdots,\tilde{\ell}_T$ is also realizable. Therefore,
    \begin{equation*}
        \mathbb{E}\left[\sum_{t=1}^TZ_t\ell_t(i_t)\right] =\mathbb{E}\left[\sum_{t=1}^T\tilde{\ell}_t(i_t)\right]\le R_{\A}(T).
    \end{equation*}
    
    On the other hand, since $Z_t$ is independent of $i_t$ and $\ell_t$, we have
    \begin{equation}
        \mathbb{E}\left[Z_t\ell_t(i_t)\right] = \mathbb{E}\left[Z_t\right]\mathbb{E}\left[\ell_t(i_t)\right] = q\mathbb{E}\left[\ell_t(i_t)\right].
    \end{equation}
    Summing over $t$ gives
    \begin{equation*}
        q\mathbb{E}\left[\sum_{t=1}^T\ell_t(i_t)\right] = \mathbb{E}\left[\sum_{t=1}^TZ_t\ell_t(i_t)\right]\le R_A(T).
    \end{equation*}
    Rearranging the inequality yields the desired conclusion.
\end{proof}

\begin{theorem}
    Let $T$ be sufficiently large and $N\ge 2T$. For any adaptively $(\varepsilon,\delta)$-differentially private OPE algorithm with $\varepsilon\le1$ and $\delta \le T^{-3}$, there exists an adaptive adversary generating $\ell_1,\dots,\ell_T$ such that $L^\star = 0$ and 
    \begin{equation*}
        \mathbb{E}\left[\sum_{t=1}^{T}\ell_t(i_t)\right] = \Omega\left(\min\left\{T, \frac{\sqrt{T}}{\varepsilon\log T}\right\}\right).
    \end{equation*}
\end{theorem}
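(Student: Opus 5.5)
We argue by reduction to Lemma~\ref{lem:high}, combining the amplification result of Lemma~\ref{lem:ampl} with the loss scaling of Lemma~\ref{lem:scale}. Fix an adaptively $(\varepsilon,\delta)$-differentially private algorithm $\A$ with $\varepsilon\le 1$ and $\delta\le T^{-3}$. Let $R_\A(T)$ denote its worst-case expected cumulative loss over adaptively generated realizable sequences. We split into two cases according to the size of $\varepsilon$.

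\emph{Small $\varepsilon$.} Suppose $\varepsilon\le 1/(\sqrt{T}\ln T)$. Then Lemma~\ref{lem:high} applies directly to $\A$ and gives an $\Omega(T)$ lower bound, which equals the claimed minimum up to constants.

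\emph{Large $\varepsilon$.} Otherwise, set
\[
q = \frac{1}{e\,\varepsilon\sqrt{T}\ln T}\in(0,1],
\]
where the upper bound $q\le 1$ holds precisely because we are in this case. By Lemma~\ref{lem:ampl}, $\A_q$ is adaptively $(\varepsilon',q\delta)$-differentially private with $\varepsilon'=\ln(1+q(e^\varepsilon-1))\le q(e^\varepsilon-1)$. Since $\varepsilon\le 1$, convexity of $x\mapsto e^x$ gives $e^\varepsilon-1\le (e-1)\varepsilon\le e\varepsilon$. Hence
\[
\varepsilon'\le qe\varepsilon = \frac{1}{\sqrt{T}\ln T},
\]
and also $q\delta\le \delta\le T^{-3}$. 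Lemma~\ref{lem:high} therefore applies to $\A_q$: there is an adaptive realizable adversary forcing $\A_q$ to incur expected loss $\Omega(T)$. By Lemma~\ref{lem:scale}, the expected loss of $\A_q$ on any adaptive realizable sequence is at most $R_\A(T)/q$. Thus $R_\A(T)/q=\Omega(T)$, which rearranges to
\[
R_\A(T)=\Omega(qT)=\Omega\!\left(\frac{\sqrt{T}}{\varepsilon\log T}\right).
\]
Combining the two cases yields $\Omega(\min\{T,\sqrt{T}/(\varepsilon\log T)\})$.

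The main subtlety is that the statement asks for a single adversary attacking $\A$, whereas the reduction produces an adversary attacking $\A_q$. Lemma~\ref{lem:scale} is phrased as a contrapositive bound on the supremum $R_\A(T)$, so I would argue as follows. If every adaptive realizable adversary against $\A$ induced expected loss below $cqT$ for a small constant $c$, then Lemma~\ref{lem:scale} would bound the loss of $\A_q$ by $cT$ against every adversary, contradicting Lemma~\ref{lem:high}. Hence some adversary achieves $\Omega(qT)$ against $\A$. Concretely, this adversary simulates the hard adversary for $\A_q$, including the $Z_t$ coins, and feeds $\A$ the losses $Z_t\ell_t$. These losses remain realizable because $\ell_t(i^\star)=0$ implies $Z_t\ell_t(i^\star)=0$.

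The remaining obstacle is to check that the hypotheses of Lemma~\ref{lem:high} hold uniformly. Both $T$ being sufficiently large and $N\ge 2T$ are unchanged by the reduction, and the constants hidden in $\Omega(\cdot)$ do not depend on $q$.
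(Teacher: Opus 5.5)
Your proof is correct and follows the paper's argument. The paper takes $q=\min\{1,1/(2\varepsilon\sqrt{T}\ln T)\}$, which is your two-case split with constant $2$ in place of $e$, and then chains Lemma~\ref{lem:ampl}, Lemma~\ref{lem:high}, and Lemma~\ref{lem:scale} exactly as you do; your explicit simulating adversary feeding $Z_t\ell_t$ to $\A$ spells out the existential step that the paper leaves implicit by phrasing the bound through $R_\A(T)$.
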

\begin{proof}
    Suppose $\A$ is adaptively $(\varepsilon,\delta)$-DP and has expected cumulative loss at most $R_{\A}(T)$ on realizable input sequences. Consider $\A_q$ in Definition~\ref{def:poi} with 
    \begin{equation*}
        q = \min\left\{1,\frac{1}{2\varepsilon\sqrt{T}\ln T}\right\}.
    \end{equation*}
    Lemma~\ref{lem:ampl} establishes that $\A_q$ is adaptively $(\ln(1+q(e^\varepsilon-1)),q\delta)$-differentially private. Since $\varepsilon \le 1$, we have
    \begin{equation*}
        \ln(1+q(e^\varepsilon-1))\le 2q\varepsilon \le 1/(\sqrt{T}\ln T).
    \end{equation*}
    Moreover,
    \begin{equation*}
        q\delta\le \delta\le T^{-3}.
    \end{equation*}
    Therefore, $\A_q$ is adaptively $(1/(\sqrt{T}\ln T), T^{-3})$-differentially private. By Lemma~\ref{lem:high}, $\A_q$ must incur an expected cumulative loss of $\Omega(T)$. Lemma~\ref{lem:scale} then yields 
    \begin{equation*}
        R_\A(T) = \Omega(qT) = \Omega\left(\min\left\{T, \frac{\sqrt{T}}{\varepsilon\log T}\right\}\right).
    \end{equation*}
\end{proof}



\end{document}